\newcommand{\bx}{\bar{x}}
\newcommand{\tA}{\tilde{A}}
\newcommand{\pa}{\tilde{a}}
\newcommand{\pb}{\tilde{b}}
\newcommand{\pc}{\tilde{c}}
\newcommand{\bP}{\bar{P}}
\newcommand{\bR}{\bar{R}}
\newcommand{\tR}{\tilde{R}}
\newcommand{\hR}{\hat{R}}
\newcommand{\bK}{\bar{K}}
\newcommand{\hK}{\hat{K}}
\newcommand{\tK}{\tilde{K}}
\newcommand{\vln}{\widetilde{\ln}}
\begin{document}
\title{Stochastic gradient descent algorithms for strongly convex functions \\
  at $O(1/T)$ convergence rates}
\author{Shenghuo Zhu\\
  \texttt{zsh@nec-labs.com}}
\date{}
\maketitle

\begin{abstract}
  With a weighting scheme proportional to $t$, a traditional
  stochastic gradient descent (SGD) algorithm achieves a high
  probability convergence rate of $O(\kappa/T)$ for strongly convex
  functions, instead of $O(\kappa \ln(T)/T)$. We also prove that an
  accelerated SGD algorithm also achieves a rate of $O(\kappa/T)$.
\end{abstract}

\section{Introduction}
Consider a stochastic optimization problem
\begin{equation*}
  \min_{x \in \calX} \{ f(x) := \expect[\xi] F(x,\xi) \}
\end{equation*}
where $\calX \subset \bbR^d$ is a nonempty bounded closed convex set,
$\xi$ is a random variable, $F$ is a smooth convex function, $f$ is a
smooth strongly-convex function.  The requirement of smoothness
simplifies the analysis. If the objective function is nonsmooth but
satisfies Lipschitz continuity, stochastic gradient descent
algorithms can replace gradients with subgradients, but the analysis
has to introduce an additional term in the same order as the variance
term. Some nonsmooth cases have been studied in
\cite{lan08:_effic_method_stoch_compos_optim} and
\cite{ghadimi12:_optim_stoch_approx_algor_stron}.

Assume that the domain is bounded, i.e. $\sup_{x,y\in \calX} \|x-y\|^2
\leq D^2$.  Let $G(x,\xi)$ be a stochastic gradient of function $f$ at
$x$ with a random variable $\xi$.  Then $g(x):= \expect[\xi] G(x,\xi)$
is a gradient of $f(x)$.  Assume that $\|g(x)-g(y)\|_* \leq L\|x-y\|$,
where $L$ is known as the Lipschitz constant. We only consider
strongly convex function in this note, thus assume that there is $\mu
> 0$, such that $f(y)\geq
f(x)+\left<g(x),y-x\right>+\frac{\mu}{2}\|y-x\|^2$.  We assume that
stochastic gradients are bounded, i.e., there exists $Q > 0$, such
that $ \sup_{\xi} \|G(x,\xi) - g(x)\|_* \leq Q.$

We are interested in the conditional number $\kappa$, which is defined
as $L/\mu$. The conditional number, $\kappa$, could be as large as
$\sqrt{N}$, where $N$ is the number of samples and $T=N$. One
reference case is regularized linear classifiers
\cite{smale03:_estim_approx_error_learn_theor}, where the
regularization factor could be as large as $\sqrt{N}$. The other
reference case is the conditional number of a $N\times n$ random
matrix \cite{rudelson09:_small}, where the smallest singular value is
$O(\sqrt{N}-\sqrt{n-1})$. When $\kappa=\Theta(\sqrt{T})$,
$O(\kappa/T)=O(1/\sqrt{T})$, which bridges the gap between the
convergence rate for strongly convex functions and that for those
without strongly convex condition.  In this note, we assume
$\kappa=O(T)$.  We use big-$O$ notation in term of $T$ and $\kappa$
and hide the factors $D^2L$, $Q^2/L$ and $DQ$ besides constants.

\subsection*{Notation}
Denote $\{1 \cdots T\}$ by $[T]$. Let $\{\xi_t: t\in[T]\}$ be a
sequence of independent random variables.  Denote
$\expect[|t-1]\{\cdot\}:=\expect\{\cdot | \xi_1,\cdots,\xi_{t-1}\}$.
We define $\vln(T,t)=\sum_{\tau=t+1}^T \frac{1}\tau$. Then
$\vln(T,t)\leq\frac{1}{t+1}+\ln(T/(t+1))$, and for $t \geq 1$,
$\vln(T,t)\leq \ln(T/t)$.

\section{Stochastic gradient descent algorithm}
\begin{algorithm}
  \caption{Stochastic gradient descent algorithm}
  \label{alg:1}
  \begin{algorithmic}[1]
    \STATE Input: initial solution $x_0$, step sizes $\{\gamma_t>0: t
    \in [T]\}$ and averaging factor $\{\alpha_t > 0: t \in [T]\}$. %
    \FOR{ $t \in [T]$ } %
    \STATE Let sample gradient $\hat{g}_k= G(x_{t-1},\xi_{t})$, where
    $\xi_{t}$ is independent from $\{\xi_\tau: \tau \in [t-1]\}$. %
    \STATE Let $\displaystyle x_t = \argmin_{x\in \calX}\left\{
      \left<\hat{g}_t, x\right> + \frac{1}{2\gamma_t} \|x-x_{t-1}\|^2
    \right\}$; %
    \STATE Set $\bx_t=\bx_{t-1} + \alpha_t (x_t-\bx_{t-1})$; %
    \ENDFOR %
    \STATE Output: $\bx_T$.
  \end{algorithmic}
\end{algorithm}

Algorithm~\ref{alg:1} shows the stochastic gradient descent
method. Unlike the conventional averaging by equal weights $w_t=1/T$,
we use a weighting scheme $w_t=\alpha_t \prod_{\tau=t+1}^T
(1-\alpha_\tau)=t/(2T(T+1))$, where $\alpha_t=2/(t+1)$ .
Theorem~\ref{thm:1} shows a convergence rate of $O(\kappa/T)$,
assuming that $T>\kappa$.  Let $A_t=\|x_t-x_*\|^2$,
$B_t=\left<\delta_t,x_{t-1}-x_*\right>/Q$, $C_t=\|\delta_t\|_*^2/Q^2$,
and the coefficients $b_t=O(1)$ and $c_t=O(1/t)$.  The informal
argument is that the weighting scheme equalizes the variance of each
iteration, since $\mathrm{var}(b_tB_t)$ and $c_t C_t$ are $O(1/t)$
assuming that $A_t=O(1/t)$.
\begin{thm}
  \label{thm:1}
  Assume that the underlying function $f$ is strongly convex, i.e.,
  $\mu>0$.  Let $\kappa=L/\mu$.  If $\alpha_t=\frac{2}{t+1}$,
  $\gamma_t=\frac{2}{\mu(t+2\kappa)}$, then it holds for
  Algorithm~\ref{alg:1} that for $\theta>0$,
  \begin{equation}
   \label{eq:8} 
   \Pr\{ f(\bx_T) - f(x_*) \geq 
   \bK(T) + \sqrt{2\theta}\tK(T) + \theta \hK(T)\} \leq \exp\{-
   \theta\},
  \end{equation}
    where
    \begin{align*}
      \bK(T)&:= \frac{D^2 L}{T}+ \frac{2 \kappa Q^2}{L T}= O(\kappa/T), \\
      \tK(T)&:= \frac{4DQ (\kappa+1)}{T^{3/2}} +\frac{2\sqrt{2}\kappa
        Q^2 }{L T} +\frac{4\sqrt{2}\kappa^{3/2} Q^2 \sqrt{1+\ln
          T}}{LT^{3/2}}
      =O(\kappa/T), \\
      \hK(T)&:= \frac{10 \kappa Q^2}{L T} =O(\kappa/T).
    \end{align*}
\end{thm}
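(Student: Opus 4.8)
The plan is to derive a weighted one-step inequality, sum it against the weights $w_t=\alpha_t\prod_{\tau=t+1}^T(1-\alpha_\tau)$ so that the deterministic part collapses to $\bK(T)$ while the stochastic part becomes a single martingale, and then apply a Bernstein--Freedman concentration inequality whose conditional variance and range reproduce $\tK(T)$ and $\hK(T)$.

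First I would combine three ingredients at each step $t$: the optimality condition for the prox update, which gives $\langle\hat g_t,x_t-x_*\rangle\le\frac{1}{2\gamma_t}(A_{t-1}-A_t-\|x_t-x_{t-1}\|^2)$; the descent lemma from $L$-smoothness applied to $\langle g(x_{t-1}),x_t-x_{t-1}\rangle$; and strong convexity applied to $\langle g(x_{t-1}),x_{t-1}-x_*\rangle$. Writing $\hat g_t=g(x_{t-1})+\delta_t$ and splitting $\langle\delta_t,x_t-x_*\rangle=QB_t+\langle\delta_t,x_t-x_{t-1}\rangle$, the last inner product is removed by Young's inequality. The decisive point is that the choice $\gamma_t=\frac{2}{\mu(t+2\kappa)}$ makes the coefficient of $\|x_t-x_{t-1}\|^2$ equal to $-\frac{\mu t}{4}$ before Young (using $L=\mu\kappa$), so calibrating Young's inequality to this coefficient cancels the displacement term exactly and leaves $f(x_t)-f(x_*)\le(\frac{1}{2\gamma_t}-\frac{\mu}{2})A_{t-1}-\frac{1}{2\gamma_t}A_t-QB_t+\frac{Q^2}{\mu t}C_t$.

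Next I would multiply by $w_t$ (noting $w_t\propto t$, so $w_t/t$ is constant and the $C_t$ coefficient is constant in $t$) and sum, using $f(\bx_T)\le\sum_t w_t f(x_t)$ from convexity. The quadratic terms telescope up to an interior residual with constant coefficient proportional to $(2\kappa-1)$; bounding $A_t\le D^2$ gives the $D^2L/T$ part, while $\expect[|t-1]\{C_t\}\le 1$ gives the $2\kappa Q^2/(LT)$ part, so the conditional-mean bound is exactly $\bK(T)$. The deviation is the martingale $M_T=\sum_t d_t$ with $d_t=-Qw_tB_t+\frac{Q^2}{\mu t}w_t\big(C_t-\expect[|t-1]\{C_t\}\big)$, to which I would apply a Bernstein--Freedman inequality giving an $\exp\{-\theta\}$ tail at level $\sqrt{2\theta V}+\frac{2R}{3}\theta$, with $V=\sum_t\expect[|t-1]\{d_t^2\}$ the conditional variance and $R$ the maximal increment.

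The hard part is controlling $V$. The conditional variance of the $B_t$-part is at most $Q^2w_t^2A_{t-1}$, and the crude bound $A_{t-1}\le D^2$ is far too weak: it yields variance of order $D^2Q^2/T$ and hence a deviation of order $1/\sqrt T$ rather than $1/T$. I therefore need the sharper estimate $A_t=O(\kappa^2D^2/t^2+\kappa^2Q^2/(L^2t))$, obtained from the contraction recursion $A_t\le(1-\mu\gamma_t)A_{t-1}+(\text{noise})$ implied by the same prox analysis; substituting it into $V$ produces the leading $DQ(\kappa+1)/T^{3/2}$ and $\kappa Q^2/(LT)$ terms of $\tK(T)$, while the $C_t$-part supplies the rest. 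The genuine obstacle is that this auxiliary bound on $A_t$ must itself hold with high probability, since $V$ is random; establishing it requires a second martingale concentration, and it is there that the $\sqrt{1+\ln T}$ factor enters through the $\vln(T,t)$ sums, and that the propagation of the range across the two nested concentration steps produces the clean $\theta$-linear term $\hK(T)$.
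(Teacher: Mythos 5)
Your first half reconstructs the paper's argument exactly: your one-step inequality is Lemma~\ref{lem:alg1} (the coefficient $\frac{\mu t}{4}$ produced by $\gamma_t=\frac{2}{\mu(t+2\kappa)}$, the Young step, and the resulting $\frac{Q^2}{\mu t}C_t$ term all match), and the weighted telescoping with $w_t=\frac{2t}{T(T+1)}$, constant per-step residual proportional to $2\kappa-1$, and $A_{t-1}\le D^2$ gives precisely the paper's intermediate bound $f(\bx_T)-f(x_*)\le \frac{LD^2}{T}+\sum_t w_t(-QB_t+\frac{Q^2}{\mu t}C_t)$. You also correctly identify the crux: the conditional variance of the $B_t$-martingale is $Q^2w_t^2A_{t-1}$, and the crude bound $A_{t-1}\le D^2$ only yields $O(1/\sqrt{T})$, so one needs $A_t$ to decay like $1/t$.

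The gap is in your two-stage concentration plan, and it is not cosmetic. First, you cannot substitute a high-probability envelope for $A_{t-1}$ into the variance of a Bernstein--Freedman bound: conditioning on the event $\{A_s\le\mathrm{env}_s\ \forall s\}$ destroys the martingale-difference property of $B_t$; this is repairable with a stopping-time truncation, which you do not mention. Second, and fatally for the stated rate, the envelope must hold uniformly in $t$, while the recursion $A_t\le d_t(a_tA_{t-1}+2b_tB_t+c_tC_t)$ you would use to establish it is driven by the same noise, whose conditional variance again depends on earlier $A_s$ --- the problem is self-referential. Resolving this by induction over $t$ with a union bound replaces $\theta$ by roughly $\theta+\ln T$ in the envelope, so the self-consistent envelope is $A_{t-1}\lesssim(\theta+\ln T)\frac{\kappa^2Q^2}{L^2t}$; feeding that into the main variance $\sum_t Q^2w_t^2A_{t-1}\approx\frac{2\kappa^2Q^4}{L^2T^2}(\theta+\ln T)$ gives a deviation of order $\frac{\kappa Q^2}{LT}\sqrt{\theta(\theta+\ln T)}$. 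That is, your route re-introduces a $\sqrt{\ln T}$ factor on the \emph{leading} term, proving only $O(\kappa\sqrt{\ln T}/T)$ --- exactly the logarithm Theorem~\ref{thm:1} is designed to remove; in the theorem, $\sqrt{1+\ln T}$ survives only in the term suppressed by the extra factor $\sqrt{\kappa/T}$. The paper's missing idea is to avoid nesting altogether: Lemma~\ref{lem:probineq} runs a single backward induction on the conditional moment generating function, Eq.~(\ref{eq:22}), in which the random quantity $A_t$ is carried \emph{inside the exponent} with coefficients $\bP_t,\tP_t^2,\hR_t$ propagated by the algebraic conditions (\ref{eq:19}); Hoeffding's lemma is applied conditionally at each step using $B_t^2\le A_{t-1}C_t$, and a single Bernstein-type conversion (Lemma~\ref{lem:0}) then yields the tail. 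This self-normalized treatment handles the dependence of the variance on $A_{t-1}$ with no union bound and no envelope, which is why no logarithm multiplies the leading terms. To complete your proof you would need either to reproduce that induction or to invoke a Freedman-type inequality tolerating random, probabilistically-bounded variance without union-bound inflation.
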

Similarly with traditional equal weighting scheme, $w_t=1/T$, we have
a convergence rate of $O(\kappa \ln(T)/T)$ in
Proposition~\ref{prop:original}. Informally, $\mathrm{var}(\sum_t w_t
b_tB_t)=\ln(T)/T$ implies a convergence rate of $O(\ln(T)/T)$.
\begin{prop}
  \label{prop:original}
  Assume that $\mu>0$.  Let $\kappa=L/\mu$.  If
  $\alpha_t=\frac{1}{t}$, $\gamma_t=\frac{1}{\mu(t+\kappa)}$, then
  for $\theta>0$,
    $$\Pr\{ f(\bx_T) - f(x_*) \geq 
    \bK(T) + \sqrt{2\theta}\tK(T) + \theta \hK(T)\} \leq \exp\{-
    \theta\},$$ where 
    \begin{align*}
      \bK(T)&:= \frac{LD^2}{2T}+ \frac{ \kappa Q^2}{2L T} (1+\ln T), &
      \tK(T)&:= \frac{D Q \sqrt{\kappa+1}}{T}+\frac{\kappa Q^2}{L T}
      \sqrt{1+\ln T}, &\hK(T)&:=\frac{6 \kappa Q^2}{L T}.
    \end{align*}
\end{prop}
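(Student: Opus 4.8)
The plan is to combine a one-step proximal-descent inequality with weighted averaging and then handle the two residual noise sums by a Bernstein-type martingale bound. First I would derive the per-iteration estimate. Writing $\delta_t := \hat{g}_t - g(x_{t-1})$ (so $\expect[|t-1]\delta_t = 0$ and $\|\delta_t\|_* \le Q$), the optimality of $x_t$ in line~4, $\langle \hat{g}_t + \gamma_t^{-1}(x_t - x_{t-1}),\, x_* - x_t\rangle \ge 0$, combined with the three-point identity $\langle x_{t-1}-x_t,\, x_t - x_*\rangle = \tfrac12(A_{t-1}-A_t-\|x_t-x_{t-1}\|^2)$, $L$-smoothness applied to $\langle g(x_{t-1}), x_t - x_{t-1}\rangle$ and $\mu$-strong convexity applied to $\langle g(x_{t-1}), x_{t-1}-x_*\rangle$, gives
\[ f(x_t)-f(x_*) + \tfrac\mu2 A_{t-1} + Q B_t \le \tfrac{\mu(t+\kappa)}{2}(A_{t-1}-A_t) + (\tfrac L2 - \tfrac{\mu(t+\kappa)}{2})\|x_t-x_{t-1}\|^2 - \langle \delta_t, x_t - x_{t-1}\rangle. \]
Because $\gamma_t = \tfrac{1}{\mu(t+\kappa)}$ and $L=\mu\kappa$, the bracket equals $-\tfrac{\mu t}{2}$, so Young's inequality $-\langle\delta_t,x_t-x_{t-1}\rangle \le \tfrac{\mu t}{2}\|x_t-x_{t-1}\|^2 + \tfrac{1}{2\mu t}\|\delta_t\|_*^2$ cancels the quadratic slack exactly and, after moving $\tfrac\mu2 A_{t-1}$ to the right, leaves $f(x_t)-f(x_*) \le S_{t-1}-S_t - QB_t + \tfrac{Q^2 C_t}{2\mu t}$ with $S_t := \tfrac{\mu(t+\kappa)}{2}A_t$.

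Next I would average. With $\alpha_t = 1/t$ the weights are $w_t = \tfrac1t\prod_{\tau=t+1}^T(1-\tfrac1\tau) = \tfrac1T$ and the coefficient of $\bx_0$ vanishes, so $\bx_T = \tfrac1T\sum_t x_t$ and Jensen gives $f(\bx_T)-f(x_*)\le\tfrac1T\sum_t(f(x_t)-f(x_*))$. Summing the per-step bound telescopes $\sum_t(S_{t-1}-S_t)=S_0-S_T\le \tfrac{L}{2}D^2$, so
\[ f(\bx_T)-f(x_*) \le \frac{LD^2}{2T} - \frac QT\sum_{t=1}^T B_t + \frac{Q^2}{2\mu T}\sum_{t=1}^T \frac{C_t}{t}. \]
I would then write $C_t = \expect[|t-1]C_t + (C_t - \expect[|t-1]C_t)$; the predictable part is bounded by $\tfrac{Q^2}{2\mu T}\sum_t\tfrac1t \le \tfrac{\kappa Q^2}{2LT}(1+\ln T)$, which together with $\tfrac{LD^2}{2T}$ is precisely $\bK(T)$ (this is exactly where equal weighting keeps the $\ln T$ that the $t$-proportional weighting of Theorem~\ref{thm:1} removes), while the centred remainder joins $-\tfrac QT\sum_t B_t$ into one martingale $M_T = \sum_t d_t$ with $\expect[|t-1]d_t = 0$.

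It then remains to prove $\Pr\{M_T \ge \sqrt{2\theta}\,\tK(T) + \theta\,\hK(T)\}\le e^{-\theta}$ via a Bernstein/Freedman inequality. The scale $\hK(T)$ follows from the uniform increment bounds $|B_t|\le D$ and $|C_t-\expect[|t-1]C_t|\le1$, which give $|d_t| = O(\kappa Q^2/(LT))$. The variance proxy $\tK(T)^2$ must dominate $\sum_t\expect[|t-1]d_t^2$, whose leading piece is $\tfrac{Q^2}{T^2}\sum_t\expect[|t-1]B_t^2 \le \tfrac{Q^2}{T^2}\sum_t A_{t-1}$. To control $\sum_t A_{t-1}$ I would reuse the per-step inequality with $f(x_t)-f(x_*)\ge0$ discarded, i.e. $A_t \le (1-\mu\gamma_t)A_{t-1} - 2\gamma_t Q B_t + \tfrac{Q^2 C_t}{\mu^2 t(t+\kappa)}$; multiplying by $t+\kappa$ telescopes $V_t := (t+\kappa)A_t$ to $V_t \le \kappa D^2 + \tfrac{Q^2}{\mu^2}(1+\ln t) - \tfrac{2Q}{\mu}\sum_{s\le t}B_s$, so $\sum_t A_{t-1} = O(\kappa D^2 + \tfrac{Q^2}{\mu^2}\ln T)$. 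The two summands are the two sources of $\tK(T)$: the initial-distance $\kappa D^2$ yields the $DQ\sqrt{\kappa+1}/T$ term and the accumulated noise $\tfrac{Q^2}{\mu^2}\ln T$ the $\tfrac{\kappa Q^2}{LT}\sqrt{1+\ln T}$ term, the centred-$C$ contribution being lower order.

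The hard part is that this variance proxy is path-dependent: the bound on $\sum_t A_{t-1}$ itself contains the martingale $\sum_s B_s$, so one cannot simply insert a deterministic $\tK(T)^2$ into a textbook Bernstein bound, and the crude pathwise estimate $A_{t-1}\le D^2$ is too weak (it would give only $\tK_1 \sim QD/\sqrt{T}$ instead of $QD\sqrt{\kappa}/T$). I expect the cleanest resolution is a self-bounding exponential-moment argument, controlling $\expect[|t-1]\exp(-\lambda QB_t)$ by $\exp(\tfrac12\lambda^2 Q^2 A_{t-1}/(1-c\lambda))$ through the predictable bound $\expect[|t-1]B_t^2\le A_{t-1}$, and then absorbing the accumulated $A_{t-1}$ factors against the $V_t$-recursion rather than against a uniform bound; alternatively, a Freedman inequality restricted to the high-probability envelope of the quadratic variation, with the complementary event controlled by the same recursion. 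Carrying out this reconciliation without shedding the sharpness, keeping $\sqrt{\kappa+1}$ rather than $\sqrt{\kappa\ln T}$, is the delicate accounting; everything else is the routine bookkeeping sketched above.
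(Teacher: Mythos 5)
Your deterministic reduction is sound and matches the paper's: Lemma~\ref{lem:alg1} with $\gamma_t=\frac{1}{\mu(t+\kappa)}$ gives exactly your per-step bound, equal weights $w_t=1/T$ telescope to $f(\bx_T)-f(x_*)\leq \frac{LD^2}{2T}-\frac{Q}{T}\sum_t B_t+\frac{Q^2}{2\mu T}\sum_t \frac{C_t}{t}$, and bounding the predictable part via $C_t\leq 1$ yields $\bK(T)$. But the proposal stops exactly where the proposition's real content begins. The concentration step you defer --- ``absorbing the accumulated $A_{t-1}$ factors against the recursion rather than against a uniform bound'' --- is not routine accounting; it is the entire content of the paper's Lemma~\ref{lem:probineq}: a backwards induction on the conditional moment generating function $\expect[|t]\exp(u\sum_{\tau>t}X_\tau)$ whose exponent carries a term proportional to $A_t$, with coefficients $(\bP_t,\tP_t,\bR_t,\tR_t,\hR_t)$ required to satisfy the coupled system~(\ref{eq:19}), followed by the Bernstein-type conversion of Lemma~\ref{lem:0}. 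The paper's proof of the proposition then consists of exhibiting sequences solving that system, namely $\bP_t=0$, $\tP_t^2=\frac{Q^2(t+\kappa+1)}{T^2}$, $\bR_t=\frac{Q^2}{2\mu T}\vln(T,t)$, $\tR_t^2=\frac{Q^4}{\mu^2T^2}\vln(T,t)$, $\hR_t=\frac{3Q^2}{\mu T}$, and verifying the seven inequalities, none of which appears in your proposal.

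Moreover, the concrete fallback you sketch cannot give the stated $\tK(T)$. With Freedman's inequality and the predictable variance $\frac{Q^2}{T^2}\sum_t A_{t-1}$, controlling $\sum_t A_{t-1}$ through $V_t=(t+\kappa)A_t$ gives $\sum_t A_{t-1}\geq \kappa D^2\sum_t\frac{1}{t+\kappa}$, which is of order $\kappa D^2\ln T$, not $\kappa D^2$ (your claim $\sum_t A_{t-1}=O(\kappa D^2+\frac{Q^2}{\mu^2}\ln T)$ already overlooks this logarithm); the resulting deviation term is of order $\frac{DQ\sqrt{\kappa\ln T}}{T}$ rather than $\frac{DQ\sqrt{\kappa+1}}{T}$, the very loss you flag. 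The paper avoids it because the induction propagates the $A$-dependent variance multiplicatively rather than summing it: the requirement $a_td_t\tP_t^2+4w_t^2b_t^2\leq\tP_{t-1}^2$ holds with equality for $\tP_t^2=\frac{Q^2(t+\kappa+1)}{T^2}$, since the contraction $a_td_t=\frac{t+\kappa-1}{t+\kappa+1}$ exactly offsets the per-step injection $4w_t^2b_t^2=\frac{Q^2}{T^2}$, so the accumulation on $A_0\leq D^2$ is $\tP_0^2=\frac{(\kappa+1)Q^2}{T^2}$ with no logarithm. Your self-bounding MGF idea is the right intuition --- it is in effect what Lemma~\ref{lem:probineq} formalizes --- but as written the proposal reduces the proposition to that lemma without proving it, so the gap sits at the proof's core.
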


Proposition~\ref{prop:interior} shows that if the optimal solution
$x_*$ is an interior point, it is possible to simply take the
non-averaged solution, $x_T$. The convergence rate is
$O(\kappa^2/T)$. However, if $\kappa=\Theta(\sqrt{T})$,
$O(\kappa^2/T)$ means not convergent, just like the non-averaged SGD
solution without strongly convex conditions.
\begin{prop}
  \label{prop:interior}
  Assume that $\mu>0$ and the optimal solution $x_*$ is an interior
  point.  Let $\kappa=L/\mu$.  If $\gamma_t=\frac{1}{\mu(t+\kappa)}$,
  then
    for $\theta >0$,
    $$\Pr\{ f(x_T) - f(x_*) \geq 
    \bK(T) + \sqrt{2\theta}\tK(T) + \theta \hK(T)\} \leq \exp\{-
    \theta\}, $$ where
    \begin{align*}
      \bK(T)&:= \frac{D^2L (\kappa+1)^2}{2(T+\kappa)^2} +
      \frac{\kappa^2Q^2 (T+\kappa(1+\ln T))}{2L (T+\kappa)^2}
      =O(\kappa^2/T), \\
      \tK(T)&:= \frac{DQ(\kappa+1)^2}{\sqrt{2}(T+\kappa)^{3/2}}
      +\frac{\kappa^2 Q^2}{2L(T+\kappa)} +\frac{\kappa^2 Q^2
        \sqrt{\kappa T(1+\ln(T))}}{2L(T+\kappa)^2}
      =O(\kappa^2/T), \\
      \hK(T)&:=\frac{6\kappa^2 Q^2}{L(T+\kappa)}=O(\kappa^2/T).
    \end{align*}
\end{prop}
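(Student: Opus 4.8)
The plan is to control the squared distance $a_t := \|x_t - x_*\|^2$ and convert it to a function-value gap only at the very end. Because $x_*$ is an \emph{interior} optimum, first-order optimality gives $g(x_*)=0$, so $L$-smoothness yields $f(x_T) - f(x_*) \leq \frac{L}{2}a_T$; it therefore suffices to establish a high-probability bound on $a_T$ of order $\kappa^2/T$ and read off $\bK,\tK,\hK$ from it. First I would derive a one-step recursion for $a_t$. Writing $\delta_t := \hat{g}_t - g(x_{t-1})$, the optimality condition of the prox step (line~4) gives $\langle \hat{g}_t, x_t - x_*\rangle \leq \frac{1}{2\gamma_t}(a_{t-1} - a_t - \|x_t-x_{t-1}\|^2)$; combined with Young's inequality on $\langle \hat{g}_t, x_t - x_{t-1}\rangle$ this collapses to $a_t \leq a_{t-1} - 2\gamma_t\langle \hat{g}_t, x_{t-1}-x_*\rangle + \gamma_t^2\|\hat{g}_t\|_*^2$. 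Splitting $\hat{g}_t = g(x_{t-1}) + \delta_t$, using strong monotonicity $\langle g(x_{t-1}), x_{t-1}-x_*\rangle \geq \mu a_{t-1}$ (again via $g(x_*)=0$), the bound $\|\hat{g}_t\|_*^2 \leq 2L^2 a_{t-1} + 2\|\delta_t\|_*^2$, and the definitions $B_t = \langle\delta_t,x_{t-1}-x_*\rangle/Q$, $C_t = \|\delta_t\|_*^2/Q^2$, I would obtain
$$a_t \leq \rho_t\, a_{t-1} - 2\gamma_t Q B_t + 2\gamma_t^2 Q^2 C_t, \qquad \rho_t := 1 - 2\gamma_t\mu + 2\gamma_t^2 L^2,$$
which for $\gamma_t = \frac{1}{\mu(t+\kappa)}$ becomes $\rho_t = 1 - \frac{2}{t+\kappa} + \frac{2\kappa^2}{(t+\kappa)^2}$.

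Next I would unroll the recursion to $a_T \leq \Pi_0 a_0 + \sum_{t=1}^T \Pi_t(-2\gamma_t Q B_t + 2\gamma_t^2 Q^2 C_t)$, where $\Pi_t := \prod_{s=t+1}^T \rho_s$ and $\Pi_0 := \prod_{s=1}^T \rho_s$. The core estimate is that $\Pi_t$ is bounded by a constant multiple of $\frac{(t+\kappa)^2}{(T+\kappa)^2}$, obtained by comparing $\rho_s$ with $\frac{(s+\kappa-1)^2}{(s+\kappa)^2}$ (whose product telescopes to $\frac{(t+\kappa)^2}{(T+\kappa)^2}$) and absorbing the leftover $\frac{2\kappa^2-1}{(s+\kappa)^2}$ through $\ln(1+x)\le x$ together with the tail sum $\sum_s \frac{1}{(s+\kappa)^2}$; this is precisely where the extra powers of $\kappa$ and the $(1+\ln T)$ factor appearing in $\bK,\tK$ enter, via $\vln$-type harmonic sums. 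With $a_0 \leq D^2$ this already gives the deterministic $\Pi_0 a_0$ contribution of order $\frac{D^2(\kappa+1)^2}{(T+\kappa)^2}$, whose $\frac{L}{2}$-multiple is the first term of $\bK$.

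Finally I would split the random sum into its mean and a martingale, after pulling the $\frac{L}{2}$ factor from $f(x_T)-f(x_*)\le\frac{L}{2}a_T$ through everything. Since $\expect[|t-1] B_t = 0$ and $0 \le C_t \le 1$ with $\expect[|t-1] C_t \le 1$, the terms $\sum_t \Pi_t\gamma_t^2 Q^2 \expect[|t-1] C_t$ supply the remaining deterministic part of $\bK$ (the $\frac{\kappa^2 Q^2(T+\kappa(1+\ln T))}{2L(T+\kappa)^2}$ piece, after summing the weights), while $M_T := \sum_t \Pi_t\bigl(-2\gamma_t Q B_t + 2\gamma_t^2 Q^2(C_t - \expect[|t-1] C_t)\bigr)$ is a martingale with bounded increments ($|B_t|\le\sqrt{a_{t-1}}\le D$, $C_t\le 1$) and conditional variance controlled by $\sum_t (2\gamma_t\Pi_t Q)^2 a_{t-1}$. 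Applying a Bernstein/Freedman-type martingale tail inequality to $\frac{L}{2}M_T$ produces exactly the $\sqrt{2\theta}\,\tK(T) + \theta\,\hK(T)$ shape of \eqref{prop:interior}.

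The main obstacle I anticipate is that the conditional variance of the $B_t$-martingale is itself random through the iterate norms $a_{t-1}$, so the concentration inequality cannot be applied with a deterministic variance proxy directly. I would break this circularity by first establishing the in-expectation estimate $\expect a_{t-1} = O(\kappa^2/t)$ from the averaged recursion $\expect a_t \le \rho_t \expect a_{t-1} + 2\gamma_t^2 Q^2$, and then feeding $\expect a_{t-1}$ into the variance term (this is what generates the $\sqrt{\kappa T(1+\ln T)}$ factor in $\tK$), while using the crude bound $a_{t-1}\le D^2$ only in the worst-case increment term that feeds $\hK$. Tightening the product estimate for $\Pi_t$ enough that these three contributions land on the stated constants, rather than merely reproducing the $O(\kappa^2/T)$ rate, is the most delicate part of the bookkeeping.
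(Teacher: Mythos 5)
Your high-level plan coincides with the paper's (interior optimality gives $g(x_*)=0$, hence $f(x_T)-f(x_*)\le \tfrac{L}{2}\|x_T-x_*\|^2$; the paper likewise sets $w_t=0$ and $X_{T+1}=\tfrac{L}{2}A_T$ and bounds only the final distance), but your one-step recursion is too lossy to deliver the stated bounds. Bounding $\|\hat{g}_t\|_*^2 \le 2L^2 a_{t-1}+2\|\delta_t\|_*^2$ produces $\rho_t = 1-\tfrac{2}{t+\kappa}+\tfrac{2\kappa^2}{(t+\kappa)^2}$, which exceeds $1$ for all $t<\kappa^2-\kappa$. Consequently your ``core estimate'' $\Pi_t \le \mathrm{const}\cdot\tfrac{(t+\kappa)^2}{(T+\kappa)^2}$ is false: the leftover factor is $\prod_{s=t+1}^T\bigl(1+\tfrac{2\kappa^2-1}{(s+\kappa-1)^2}\bigr) = \exp\{\Theta(\kappa^2/(t+\kappa))\}$, which at $t=0$ is $e^{\Theta(\kappa)}$. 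For fixed $\kappa$ this is merely a (huge) constant that does not reproduce $\bK,\tK,\hK$; in the regime the paper explicitly targets, $\kappa=\Theta(\sqrt{T})$ or larger, the resulting bound $e^{\Theta(\kappa)}\kappa^2/T^2$ is vacuous. The paper never creates the $\gamma_t^2L^2a_{t-1}$ term: Lemma~\ref{lem:alg1} combines the descent lemma with the prox optimality so that the $\|x_t-x_{t-1}\|^2$ terms cancel, and together with $f(x_t)-f(x_*)\ge\tfrac{\mu}{2}A_t$ it yields $A_t\le d_t(a_tA_{t-1}+2b_tB_t+c_tC_t)$ with $a_td_t=\tfrac{t+\kappa-1}{t+\kappa+1}$ exactly, so the products telescope to $\tfrac{(t+\kappa)(t+\kappa+1)}{(T+\kappa)(T+\kappa+1)}$ with no $\kappa$-exponential correction. (If you insist on working directly on the iterates, you would need co-coercivity, $\langle g(x),x-x_*\rangle \ge \tfrac{\mu L}{\mu+L}\|x-x_*\|^2+\tfrac{1}{\mu+L}\|g(x)\|_*^2$, to absorb $\gamma_t^2\|g(x_{t-1})\|_*^2$, not the crude $2L^2a_{t-1}$ bound.)

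The concentration step has a second genuine gap. A Freedman/Bernstein-type inequality requires a bound on the conditional variance, here $\sum_t(2\gamma_t\Pi_t Q)^2a_{t-1}$, which is random; your proposed substitution of $\expect a_{t-1}$ for $a_{t-1}$ inside the variance proxy is invalid---no martingale inequality lets you trade the predictable quadratic variation for its expectation, and the event that the variation exceeds its mean is precisely of the type you are trying to bound, so the argument is circular. Resolving this circularity is the entire point of the paper's Lemma~\ref{lem:probineq}: a backward induction on conditional moment generating functions in which the Hoeffding term $2v^2A_{t-1}$ generated by $B_t$ is folded back into the coefficient $\tP_{t-1}^2$ multiplying $A_{t-1}$ in the exponent (conditions~(\ref{eq:19})), so the random variance is carried along symbolically and $A_0\le D^2$ is invoked only at the very end, followed by the Bernstein-type conversion of Lemma~\ref{lem:0}. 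To repair your route you would need either this self-normalizing MGF induction or a peeling/union-bound argument over events of the form $\{a_{t-1}\le 2^k\expect a_{t-1}\}$; as written, the tail bound does not follow from what you prove.
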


\begin{remark}
  There are studies on the high probability convergence rate of
  stochastic algorithm on strongly convex functions, such as
  \cite{rakhlin12:_makin_gradien_descen_optim_stron}.  The convergence
  rate usefully is $O(\mathrm{polylog}(T)/T)$.
  Here, we prove a convergence rate of $O(\frac{\kappa}{T})$ with
  proper weighting scheme.
\end{remark}

\section{Accelerated Stochastic Gradient Descent Algorithm}
\begin{algorithm}[ht]
  \caption{Accelerated Stochastic Gradient Descent algorithm}
  \label{alg:2}
  \begin{algorithmic}[1]
    \STATE Input: $x_0$, $\mu$, $\{\alpha_t\geq 0\}$,
    $\{\gamma_t>0\}$;%
    \STATE Let $\bx_0=x_0$; %
    \FOR{ $k \in [T]$ } %
    \STATE Let $y_{t-1}=\alpha_t x_{t-1} + (1-\alpha_t) \bx_{t-1}$; %
    \STATE Let $\hat{g}_t=G(y_{t-1},\xi_{t})$, where $\{\xi_{t}\}$ is
    a sample; %
    \STATE\label{step:6} Let $\displaystyle x_t = \argmin_{x\in
      \calX}\left\{ \left<\hat{g}_t-\mu(y_{t-1}-x_{t-1}), x\right> +
      \frac{1}{2\gamma_t} \|x-x_{t-1}\|^2 \right\}$; %
    \STATE Set $\bx_t=\bx_{t-1}+\alpha_t(x_{t}- \bx_{t-1})$; %
    \ENDFOR %
    \STATE Output: $\bx_t$.
  \end{algorithmic}
\end{algorithm}

Algorithm~\ref{alg:2} is a stochastic variant of
Nesterov's accelerated methods.  The convergence rate is also
$O(\kappa/T)$. Comparing with Theorem~\ref{thm:1}, the determinant
part in Theorem~\ref{thm:2} have a better rate,
i.e. $\frac{LD^2}{T^2}$.
\begin{thm}\label{thm:2}
  Assume that $\mu>0$. If $\alpha_t = \frac{2}{t+1}$, $\gamma_t =
  \frac{1}{\mu(2\kappa/t+1/\alpha_t)}$, then for $\theta>0$, 
  $$\Pr\{ f(\bx_T) -
  f(x_*) > \bK(T) + \sqrt{2\theta} \tK(T)+ \theta \hK(T) \} \leq
  \exp\{- \theta\},$$ where
    \begin{align*}
      \bK(T)&:= \frac{2 D^2L}{T^2} + \frac{2 \kappa Q^2}{L T},
      &\tK(T)&:= \frac{\sqrt{20 \kappa} DQ}{T^{3/2}} + \frac{\sqrt{10}
        \kappa Q^2}{2 L T} , &\hK(T)&:=\frac{8\kappa Q^2}{L T}.
    \end{align*}
\end{thm}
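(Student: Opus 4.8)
The plan is to follow the estimate-sequence template for accelerated stochastic approximation: derive a one-step inequality coupling the function gap with the squared distance to $x_*$, telescope it with the weights induced by $\alpha_t$, and concentrate the surviving stochastic terms. Write $\delta_t:=\hat g_t-g(y_{t-1})$, so that $\expect[|t-1]\delta_t=0$ and $\|\delta_t\|_*\le Q$. From the definitions of $y_{t-1}$ and $\bx_t$ one has the identities $\bx_t-y_{t-1}=\alpha_t(x_t-x_{t-1})$ and $y_{t-1}-x_{t-1}=(1-\alpha_t)(\bx_{t-1}-x_{t-1})$. Applying $L$-smoothness along $y_{t-1}\to\bx_t$ gives $f(\bx_t)\le f(y_{t-1})+\alpha_t\left<g(y_{t-1}),x_t-x_{t-1}\right>+\frac{L\alpha_t^2}{2}\|x_t-x_{t-1}\|^2$, while the prox step at line~\ref{step:6}, through the three-point (Pythagoras) identity, yields $\left<\hat g_t-\mu(y_{t-1}-x_{t-1}),x_t-x_*\right>\le\frac{1}{2\gamma_t}\left(\|x_{t-1}-x_*\|^2-\|x_t-x_*\|^2-\|x_t-x_{t-1}\|^2\right)$.

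Substituting $\hat g_t=g(y_{t-1})+\delta_t$ and combining the two displays with convexity and the $\mu$-strong-convexity lower bound of $f$ at $y_{t-1}$ (tested against $x_*$ and $\bx_{t-1}$) should produce a coupled recursion of the form
\[
\tfrac{1}{\alpha_t^2}(f(\bx_t)-f(x_*))+\tfrac{p_t}{2}\|x_t-x_*\|^2 \le \tfrac{1}{\alpha_{t-1}^2}(f(\bx_{t-1})-f(x_*))+\tfrac{q_t}{2}\|x_{t-1}-x_*\|^2+N_t ,
\]
with $N_t$ collecting the stochastic terms. The two summands $2\kappa/t$ and $1/\alpha_t$ in $\gamma_t^{-1}=\mu(2\kappa/t+1/\alpha_t)$ are exactly calibrated for this: after multiplying the prox inequality by $\alpha_t$, the positive $\|x_t-x_{t-1}\|^2$ coming from smoothness (and later from a Young step) is dominated by the prox term's $-\frac{\alpha_t}{2\gamma_t}\|x_t-x_{t-1}\|^2$, and the distance coefficients satisfy $q_t\le p_{t-1}$ with a strict strong-convexity margin; meanwhile $\alpha_t=2/(t+1)$ gives $(1-\alpha_t)/\alpha_t^2\le1/\alpha_{t-1}^2$, so the function gaps telescope.

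The only genuinely stochastic part of $N_t$ is $\frac{1}{\alpha_t}\left<\delta_t,x_*-x_t\right>$ together with a $\|\delta_t\|_*^2$ remainder. Since $x_t$ depends on $\delta_t$, I would split $\left<\delta_t,x_*-x_t\right>=\left<\delta_t,x_*-x_{t-1}\right>+\left<\delta_t,x_{t-1}-x_t\right>$, keeping the first (a martingale difference) and bounding the second by Young's inequality $\le \gamma_t\|\delta_t\|_*^2+\frac{1}{4\gamma_t}\|x_t-x_{t-1}\|^2$, the square being reabsorbed into the negative $\|x_t-x_{t-1}\|^2$ budget. Telescoping from $t=1$ to $T$ — where $\alpha_1=1$ and $\bx_0=x_0$ annihilate the initial gap and $\|x_0-x_*\|^2\le D^2$ yields the accelerated deterministic term $\frac{2D^2L}{T^2}$ — and multiplying by $\alpha_T^2=4/(T+1)^2$ leaves
\[
f(\bx_T)-f(x_*)\le \tfrac{2D^2L}{T^2}+\textstyle\sum_t w_t\|\delta_t\|_*^2+\sum_t w_t'\left<\delta_t,x_*-x_{t-1}\right>,
\]
with explicit $w_t,w_t'$; the conditional mean of the first sum, via $\expect[|t-1]\|\delta_t\|_*^2\le Q^2$, supplies the remaining $\frac{2\kappa Q^2}{LT}$ of $\bK(T)$.

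The final and hardest step is turning the two martingale sums into the tail $\bK(T)+\sqrt{2\theta}\tK(T)+\theta\hK(T)$ via an iterated Bernstein-type bound: conditionally, $\expect[|t-1]\exp\{\lambda w_t'\left<\delta_t,x_*-x_{t-1}\right>\}\le\exp\{\frac{\lambda^2(w_t')^2Q^2}{2}\|x_{t-1}-x_*\|^2\}$, and the $\|\delta_t\|_*^2\in[0,Q^2]$ increments are sub-exponential. The obstacle is that the variance proxy $\sum_t(w_t')^2Q^2\|x_{t-1}-x_*\|^2$ is \emph{random}: bounding it crudely by $D^2$ gives only $O(DQ/\sqrt T)$, far worse than the claimed $\tK$ term $\frac{\sqrt{20\kappa}DQ}{T^{3/2}}$. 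The resolution I would pursue exploits the strict strong-convexity margin left over in the telescoped distance recursion: the nonnegative residual $\propto\|x_{t-1}-x_*\|^2$ it carries can be matched against the $\|x_{t-1}-x_*\|^2$ appearing in the MGF exponent, for $\lambda$ below a threshold fixed by the $p_t$. This self-bounding closes the induction \emph{without} an a priori high-probability bound on the iterates, so the effective variance reflects the accelerated contraction $\|x_{t-1}-x_*\|^2=O(\kappa D^2/t^2)$ rather than $D^2$, and — crucially — no union bound (hence no $\ln T$ factor, unlike Theorem~\ref{thm:1}) is incurred; optimizing the resulting bound over $\lambda$ should reproduce the stated $\bK(T)$, $\tK(T)$, $\hK(T)$.
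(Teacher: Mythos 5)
Your plan follows essentially the paper's own route: your one-step bound is Lemma~\ref{lem:alg2} (the same smoothness/prox-optimality/strong-convexity combination, with the same split of $\left<\delta_t,x_*-x_t\right>$ into a martingale term $\left<\delta_t,x_*-x_{t-1}\right>$ plus a Young-absorbed remainder), your telescoping with $1/\alpha_t^2$-type weights is the paper's summation weighted by $\lambda_t=\prod_{\tau=t+1}^T(1-\alpha_\tau)$, and the ``self-bounding'' MGF induction you sketch---matching the random variance proxy $\|x_{t-1}-x_*\|^2$ against the contraction margin of the distance recursion, for the exponent parameter below a fixed threshold---is exactly the mechanism formalized in Lemma~\ref{lem:probineq} (whose restriction $u<1/(2\hR_t)$ is your threshold, and whose conditions encode the variance-absorption bookkeeping). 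What your plan leaves unexecuted is precisely what the paper's invocation of that lemma supplies---the surrogate sequence $\tA_t$ obtained by discarding $f(\bx_t)-f(x_*)\ge 0$ so that the recursion closes on a single sequence, and the explicit choices of the lemma's sequences from which $\bK(T)$, $\tK(T)$, $\hK(T)$ are read off---and one side remark of yours is off: Theorem~\ref{thm:1} incurs its $\sqrt{1+\ln T}$ term not from a union bound (its proof uses the same Lemma~\ref{lem:probineq}) but from the $\vln(T,t)$ terms in that bookkeeping, though this does not affect your argument for Theorem~\ref{thm:2}.
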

\begin{remark}
  The paper \cite{ghadimi12:_optim_stoch_approx_algor_stron} has its
  strongly convex version for AC-SA for sub-Gaussian gradient
  assumption, but its proof relies on a multi-stage algorithm. %
  \par
  Although SAGE
  \cite{hu09:_accel_gradien_method_stoch_optim_onlin_learn} also
  provided a stochastic algorithm based on Nesterov's method for
  strongly convexity, the high probability bound was not given in the
  paper.
\end{remark}

\section{A note on weighting schemes}
In this study, we find the interesting property of weighting scheme
with $\alpha_t=\frac{2}{t+1}$, i.e. $w_t=\frac{2t}{T(T+1)}$. The
scheme takes advantage of a sequence with variance at the decay rate
of $\frac{1}{t}$. Now let informally investigate a sequence with
homogeneous variance, say $1$. With a constant weighting scheme,
$\alpha_t=1/t$, i.e.  $w_t=1/T$, the averaged variance is $1/T$. With
an exponential weighting scheme, $\alpha_1=1$, $\alpha_t=\alpha$,
i.e. $w_1= (1-\alpha)^{T-1}$ and $w_t=\alpha (1-\alpha)^{T-t}$, the
averaged variance is $ \frac{\alpha}{2-\alpha}(1+(1-\alpha)^{2T-1})
\approx \frac{\alpha}{2-\alpha}$, which is translated to that the
number of effective tail samples is a constant $\frac{2}{\alpha}-1$.
With the weighting scheme $\alpha_t=\frac{2}{t+1}$ or
$w_t=2t/(T(T+1))$, the averaged variance is $\frac{2(2T+1)}{3T(T+1)}
\approx \frac{4}{3T}$, which is translated to $\frac{3T}{4}$ effective
tail samples.  This is a trade-off between sample efficiency and
recency. To make other trade-offs, We can use a generalized
scheme\footnote{An alternative scheme is $\alpha_t=\frac{1+r}{t+r}$ or
  $w_t=\frac{(1+r)\Gamma(t+r;t)}{\Gamma(T+r+1;T)}$, where
  $\Gamma(T;t):=\Gamma(T)/\Gamma(t).$},
$\alpha_t=\frac{t^r}{\sum_{\tau=1}^t \tau^r}$ or
$w_t=\frac{t^r}{\sum_{\tau=1}^T \tau^r}$.  Then the averaged variance
is approximately $\frac{(1+r)^2}{(1+2r) T}$.

\section{Proofs}
The proof strategy is first to construct inequalities from the
algorithms in Lemma~\ref{lem:alg1} and \ref{lem:alg2}, then to apply
Lemma~\ref{lem:probineq} to derive the probability inequalities.
\begin{lem}
  \label{lem:probineq}
  Assume that $B_t$ is martingale difference, $w_t \geq 0$, $\pa_t
  \geq 0$, $\pc_t \geq 0$, $a_t\geq 0$, $c_t \geq 0$, $d_t >0$, $A_0
  \leq D^2$, $A_t \geq 0$, and
  \begin{align}
    X_t &= w_t ( \pa_t A_{t-1} +2 \pb_t B_t + \pc_t C_t), \label{eq:14}\\
    A_t &\leq d_t ( a_t A_{t-1} +2 b_t B_t + c_t C_t), \label{eq:15}\\
    B_t^2 & \leq A_{t-1} C_t, \nonumber \\
    C_t & \leq 1. \nonumber 
  \end{align}
  If the following conditions hold 
  \begin{enumerate}
  \item for $u\in (0,\frac{1}{2\hR_T})$,
    \begin{equation}
      \label{eq:21}
      \expect[|T] \exp(u X_{T+1}) \leq \exp((u \bP_T +
      \frac{2u^2 \tP_T^2}{1-u\hR_T})A_T +u \bR_T + \frac{2u^2
        \tR_T^2}{1-u\hR_T}),
    \end{equation}
  \item for $t\in[T]$,
    \begin{equation}
      \label{eq:19}
      \begin{aligned}
      \lefteq a_t d_t \bP_t + w_t \pa_t \leq  \bP_{t-1}, \\
      \lefteq \bR_t +w_t \pc_t + c_t d_t \bP_t %
      \leq \bR_{t-1}, 
      \\
      \lefteq a_td_t \tP_t^2 + 4 (w_t \pb_t+b_t d_t\bP_t)^2 \leq
      \tP_{t-1}^2 , 
      \\
      \lefteq \tR_t^2 + c_t d_t \tP_t^2
      \leq \tR_{t-1}^2, \\
      \lefteq \hR_t \leq \hR_{t-1}, \\
      \lefteq a_t d_t \tP_t^2 \hR_t +4b_t d_t( w_t\pb_t+b_t
      d_t\bP_t)\tP_t^2 \leq
      \tP_{t-1}^2 \hR_{t-1}, \\
      \lefteq a_t d_t \tP_t^2 \hR_t^2 +4b_t d_t( w_t\pb_t+b_t
      d_t\bP_t)\tP_t^2 \hR_t + 2 b_t^2 d_t^2 \tP_t^4 \leq \tP_{t-1}^2
      \hR_{t-1}^2,
    \end{aligned}
    \end{equation}
  \end{enumerate}
  then for $\theta>0$,
  \begin{equation}
    \label{eq:23}
    \Pr\{ \sum_{t=1}^{T+1} X_t \geq 
    \bP_0 D^2+\bR_0 + \sqrt{2\theta}(\tP_0 D^2 + \tR_0) + 2 \theta \hR_0\} \leq \exp\{- \theta\}.
  \end{equation}
\end{lem}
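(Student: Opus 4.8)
The plan is to prove the lemma by a backward induction that propagates a conditional moment-generating-function (MGF) bound of sub-exponential (Bernstein) type. Concretely, for a fixed $u\in(0,\frac{1}{2\hR_0})$ I would show, for every $t$ from $T$ down to $0$, that
\[ \expect[|t]\exp\Bigl(u\!\sum_{s=t+1}^{T+1} X_s\Bigr)\le \exp\Bigl(\bigl(u\bP_t+\tfrac{2u^2\tP_t^2}{1-u\hR_t}\bigr)A_t+u\bR_t+\tfrac{2u^2\tR_t^2}{1-u\hR_t}\Bigr).\qquad(\star_t) \]
The base case $t=T$ (where the inner sum is the single term $X_{T+1}$) is precisely hypothesis \eqref{eq:21}. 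The whole lemma is then the inductive step plus a terminal Chernoff optimization, so the real work is to establish $(\star_t)\Rightarrow(\star_{t-1})$ and to read off \eqref{eq:23} from $(\star_0)$.

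For the step $(\star_t)\Rightarrow(\star_{t-1})$ I would condition and peel off the last summand: by the tower rule and $(\star_t)$,
\[ \expect[|t-1]\exp\Bigl(u\!\sum_{s=t}^{T+1} X_s\Bigr)\le \expect[|t-1]\Bigl[\exp(uX_t)\,\exp\bigl(\Phi_t(A_t)\bigr)\Bigr], \]
where $\Phi_t(a):=(u\bP_t+\frac{2u^2\tP_t^2}{1-u\hR_t})a+u\bR_t+\frac{2u^2\tR_t^2}{1-u\hR_t}$. Since the coefficient of $a$ in $\Phi_t$ is nonnegative, I may insert the one-step recursion \eqref{eq:15}, $A_t\le d_t(a_tA_{t-1}+2b_tB_t+c_tC_t)$, without flipping the inequality; together with the definition \eqref{eq:14} of $X_t$ the exponent becomes affine in $A_{t-1}$, $B_t$, $C_t$. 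I would then take $\expect[|t-1]$ using that $B_t$ is a martingale difference, that $B_t^2\le A_{t-1}C_t\le A_{t-1}$ supplies a Bernstein-type MGF estimate for the $B_t$-term (quadratic in its coefficient and linear in $A_{t-1}$, with a $\frac{1}{1-u\hR}$ denominator), and that $C_t\le1$ controls the $C_t$-term. Grouping the resulting exponent by $A_{t-1}$ versus constants and by powers of $u$, the order-$u$ coefficient of $A_{t-1}$ and the order-$u$ constant are dominated by the first two inequalities of \eqref{eq:19}; the order-$u^2$ pieces together with the reconciliation of the two denominators $\frac{1}{1-u\hR_t}$ and $\frac{1}{1-u\hR_{t-1}}$ are exactly what the remaining five inequalities of \eqref{eq:19} (those carrying $\tP^2$, $\tR^2$, $\hR$, and $\hR^2$) enforce. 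The outcome is $\le\exp(\Phi_{t-1}(A_{t-1}))$, which is $(\star_{t-1})$.

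Finally, at $t=0$ I would use $A_0\le D^2$ and again nonnegativity of the $A_0$-coefficient to turn $(\star_0)$ into a numeric sub-exponential MGF bound
\[ \expect\exp(uS)\le\exp\Bigl(u(\bP_0D^2+\bR_0)+\tfrac{2u^2(\tP_0^2D^2+\tR_0^2)}{1-u\hR_0}\Bigr),\qquad S:=\textstyle\sum_{s=1}^{T+1}X_s. \]
Inequality \eqref{eq:23} is then the standard Chernoff estimate $\Pr\{S\ge m+\lambda\}\le\exp(-u\lambda+\frac{2u^2v}{1-u\hR_0})$, with $m=\bP_0D^2+\bR_0$ and variance proxy $v=\tP_0^2D^2+\tR_0^2$, optimized over $u\in(0,\frac{1}{2\hR_0})$: the minimizing $u$ splits the estimate into a Gaussian part of size $\sqrt{2\theta}(\tP_0D^2+\tR_0)$ (controlled by $v$, after a loose bound on the effective deviation) and an exponential part of size $2\theta\hR_0$ (controlled by the denominator), which is exactly the threshold in \eqref{eq:23}.

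The crux, and the step I expect to fight with, is the Bernstein MGF estimate inside the induction for the coupled pair $(B_t,C_t)$. Because $B_t$ and $C_t$ are tied together only through $B_t^2\le A_{t-1}C_t$ (not independent, with $C_t$ itself random and possibly correlated with $B_t$), the quantity $\expect[|t-1]\exp(2p_tB_t+q_tC_t)$, with $p_t,q_t\ge0$ the collected coefficients, must be bounded by something of the form $\exp(q_t+2p_t^2A_{t-1}/(1-u\hR))$ with constants chosen so that the output matches the five $u^2$/denominator inequalities of \eqref{eq:19}. Pinning down this one-step estimate, keeping the coefficient of $A_{t-1}$ nonnegative throughout (so that substituting \eqref{eq:15} is legitimate at every step) and the MGF finite (so that a single range $u\in(0,\frac{1}{2\hR_0})$ works for all $t$, via $\hR_t\le\hR_{t-1}$), is the delicate part. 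Once that estimate is fixed with the right constants, verifying the seven inequalities of \eqref{eq:19} is a mechanical, if lengthy, comparison of coefficients order by order in $u$.
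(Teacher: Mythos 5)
Your proposal is correct and follows essentially the same route as the paper's own proof: the same backward induction on the conditional MGF with the identical inductive hypothesis (your $(\star_t)$ is the paper's Eq.~(\ref{eq:22})), base case Eq.~(\ref{eq:21}), substitution of Eq.~(\ref{eq:15}) into the exponent, Hoeffding's lemma for the $B_t$ term, the seven conditions of Eq.~(\ref{eq:19}) to close the step, and a terminal Bernstein--Chernoff optimization (the paper's Lemma~\ref{lem:0}) after bounding $A_0 \leq D^2$. The one ``crux'' you anticipate fighting with---the coupled MGF of $(B_t,C_t)$---dissolves in the paper's treatment: since the collected coefficient of $C_t$ is nonnegative, one first replaces $C_t$ by its deterministic bound $C_t \leq 1$, so only $B_t$ remains random and Hoeffding's lemma with $|B_t| \leq \sqrt{A_{t-1}}$ gives $\expect[|t-1]\exp(2vB_t) \leq \exp(2v^2 A_{t-1})$, with the denominator mismatch between $\frac{1}{1-u\hR_t}$ and $\frac{1}{1-u\hR_{t-1}}$ absorbed via $\frac{1}{1-u\hR_t} \leq 2$ on the working range of $u$.
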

\begin{proof}
  We will prove the following inequality by induction,
  \begin{align}
    \label{eq:22}
    \lefteq \expect[|t]\exp(u \sum_{\tau=t+1}^{T+1} X_\tau) \leq
    \exp((u \bP_t + \frac{2u^2 \tP_t^2}{1-u\hR_t})A_t +u \bR_t +
    \frac{2u^2 \tR_t^2}{1-u\hR_t}), \quad \forall u \in
    (0,\frac{1}{2\hR_t}).
  \end{align}
  Eq.~\ref{eq:21} implies that Eq.~(\ref{eq:22}) holds for $t=T$.  For
  $u\in(0,\frac{1}{2\hR_{t-1}})$,
  \begin{align}
    \lefteq \expect[|t-1]\exp(u \sum_{\tau=t}^{T+1} X_\tau) \leq
    \expect[|t-1]\exp(u X_t + (u \bP_t + \frac{u^2
      \tP_t^2}{2(1-u\hR_t)})A_t +u \bR_t +
    \frac{u^2 \tR_t^2}{2(1-u\hR_t)}) \label{eq:9}\\
    & \leq \expect[|t-1]\exp(u w_t (\pa_t A_{t-1} +2 \pb_t B_t + \pc_t
    C_t) + (u \bP_t + \frac{u^2 \tP_t^2}{2(1-u\hR_t)})d_t
    (a_{t}A_{t-1} + 2 b_t B_t + c_t C_t) +u \bR_t +
    \frac{u^2 \tR_t^2}{2(1-u\hR_t)}) \label{eq:10}\\
    &\leq \exp((u (\bP_t d_t a_t + p_t \pa_t) + \frac{u^2 \tP_t^2 d_t
      a_t}{2(1-u\hR_t)}) A_{t-1} +u(\bR_t +p_t c_t + \bP_t d_t c_t) +
    \frac{u^2 \tR_t^2}{2(1-u\hR_t)} +\frac{ u^2 \tP_t^2d_t
      c_t}{2(1-u\hR_t)} )
    \label{eq:20} \\
    & \quad \times \expect[|t-1]\exp( 2 u (w_t \pb_t + b_td_t \bP_t +
    \frac{u
      b_t d_t\tP_t^2}{2(1-u\hR_t)}) B_t) \nonumber \\
    &\leq \exp((u (\bP_t d_t a_t + p_t \pa_t) + \frac{u^2 \tP_t^2 d_t
      a_t}{2(1-u\hR_t)}
    +2 u^2 (w_t \pb_t + b_t d_t \bP_t + \frac{u b_t d_t\tP_t^2}{2(1-u\hR_t)})^2) A_{t-1} \nonumber \\
    &\quad +u(\bR_t +w_t \pc_t + \bP_t d_t c_t) + \frac{u^2
      \tR_t^2}{2(1-u\hR_t)} +\frac{ u^2 \tP_t^2d_t c_t}{2(1-u\hR_t)}) \label{eq:16}\\
    &\leq \exp((u (\bP_t d_t a_t + p_t \pa_t) + \frac{u^2 \tP_t^2 d_t
      a_t}{2(1-u\hR_t)} +2 u^2 (w_t \pb_t + b_t d_t \bP_t)^2 +
    \frac{u^3 (w_t \pb_t + b_t d_t \bP_t) b_t d_t\tP_t^2}{2(1-u\hR_t)}
    +\frac{2 u^4 b_t^2 d_t^2\tP_t^4}{2(1-u\hR_t)}) A_{t-1} \nonumber\\
    &\quad +u(\bR_t +w_t \pc_t + \bP_t d_t c_t) + \frac{u^2
      \tR_t^2}{2(1-u\hR_t)} +\frac{ u^2 \tP_t^2d_t c_t}{2(1-u\hR_t)}) \label{eq:17}\\
    & \leq \exp((u \bP_{t-1} + \frac{u^2
      \tP_{t-1}^2}{2(1-u\hR_{t-1})})A_{t-1} +u \bR_{t-1} + \frac{u^2
      \tR_{t-1}^2}{2(1-u\hR_{t-1})}), \label{eq:18}
  \end{align}
  where Eq.~(\ref{eq:9}) is due to the assumption of induction;
  Eq.~(\ref{eq:10}) is due to Eq.~(\ref{eq:14},\ref{eq:15});
  Eq.~(\ref{eq:20}) is due to $C_t\leq 1$; Eq.~(\ref{eq:16}) is due to
  $\expect[|t-1]B_t=0$, $B_t^2 \leq A_{t-1} C_t \leq A_{t-1}$, and
  Hoeffding's lemma, thus $\expect[|t-1] \exp(2 v B_t) \leq \exp( 2v^2
  A_{t-1})$; Eq.~(\ref{eq:17}) is due to $\frac{1}{1-u\hR_t} \leq
  \frac{2\hR_{t-1}}{2\hR_{t-1}-\hR_t} \leq 2$; Eq.~(\ref{eq:18}) is
  due to Eqs.~(\ref{eq:19}).  Then for $u \in (0,\frac{1}{2\hR_t})$,
  \begin{align*}
    \lefteq \expect\exp(u \sum_{\tau=1}^{T+1} X_\tau) \leq \exp((u
    \bP_0 + \frac{u^2 \tP_0^2}{2(1-u\hR_0)})A_0 +u \bR_0 + \frac{u^2
      \tR_0^2}{2(1-u\hR_0)}) \leq \exp(u (\bP_0 D^2+\bR_0) +
    \frac{u^2 (\tP_0^2 D^2+ \tR_0^2)}{2(1- 2u\hR_0)}).
  \end{align*}
  Eq.~(\ref{eq:23}) follows Lemma~\ref{lem:0}.
\end{proof}

We prove Lemma~\ref{lem:alg1}, which is the same as Lemma 7 of
\cite{lan08:_effic_method_stoch_compos_optim} except for the strong
convexity.
\begin{lem} 
  \label{lem:alg1}
  Let $\delta_t=G(x_{t-1},\xi_t)-g(x_{t-1})$, $A_t=\|x_t-x_*\|^2$,
  $B_t=\left<\delta_t,x_{t-1}-x_*\right>/Q$,
  $C_t=\|\delta_t\|_*^2/Q^2$.  If $\gamma_t>0$ and $\gamma_t L< 1$, it
  holds for Algorithm~\ref{alg:1} that
  \begin{align*}
    f(x_t)-f(x_*)& \leq \frac{1-\gamma_t\mu}{2\gamma_t} A_{t-1} -
    \frac{1}{2\gamma_t}A_t - Q B_t+ \frac{\gamma_t}{2(1-\gamma_t L)}
    Q^2 C_t.
  \end{align*}
\end{lem}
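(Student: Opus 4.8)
The plan is to start from the first-order optimality (variational) inequality for the proximal update in step~4 of Algorithm~\ref{alg:1}. Since $x_t$ minimizes $\langle \hat{g}_t, x\rangle + \frac{1}{2\gamma_t}\|x - x_{t-1}\|^2$ over the convex set $\calX$, optimality gives $\langle \hat{g}_t + \frac{1}{\gamma_t}(x_t - x_{t-1}),\, x - x_t\rangle \geq 0$ for every $x \in \calX$. Testing at $x = x_*$ and invoking the three-point identity $\langle x_t - x_{t-1},\, x_* - x_t\rangle = \frac{1}{2}(A_{t-1} - A_t - \|x_t - x_{t-1}\|^2)$ yields
\[
\langle \hat{g}_t,\, x_t - x_*\rangle \leq \frac{1}{2\gamma_t}\bigl(A_{t-1} - A_t - \|x_t - x_{t-1}\|^2\bigr).
\]

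Next I would split $\hat{g}_t = g(x_{t-1}) + \delta_t$ and lower-bound the deterministic part $\langle g(x_{t-1}),\, x_t - x_*\rangle$ in terms of function values. Writing $x_t - x_* = (x_t - x_{t-1}) + (x_{t-1} - x_*)$, strong convexity gives $\langle g(x_{t-1}),\, x_{t-1} - x_*\rangle \geq f(x_{t-1}) - f(x_*) + \frac{\mu}{2}A_{t-1}$, while the descent lemma implied by the Lipschitz-gradient assumption gives $\langle g(x_{t-1}),\, x_t - x_{t-1}\rangle \geq f(x_t) - f(x_{t-1}) - \frac{L}{2}\|x_t - x_{t-1}\|^2$. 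Adding these two bounds the $f(x_{t-1})$ terms cancel, leaving $\langle g(x_{t-1}),\, x_t - x_*\rangle \geq f(x_t) - f(x_*) + \frac{\mu}{2}A_{t-1} - \frac{L}{2}\|x_t - x_{t-1}\|^2$.

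Substituting this into the variational inequality and rearranging gives
\[
f(x_t) - f(x_*) \leq \frac{1 - \gamma_t\mu}{2\gamma_t}A_{t-1} - \frac{1}{2\gamma_t}A_t - \frac{1 - \gamma_t L}{2\gamma_t}\|x_t - x_{t-1}\|^2 - \langle \delta_t,\, x_t - x_*\rangle,
\]
so it remains only to control the stochastic term $-\langle \delta_t,\, x_t - x_*\rangle$. The key manoeuvre is to split it as $-\langle \delta_t,\, x_{t-1} - x_*\rangle - \langle \delta_t,\, x_t - x_{t-1}\rangle$: the first piece is exactly $-QB_t$ by the definition of $B_t$, and the second is bounded by Young's inequality $-\langle \delta_t,\, x_t - x_{t-1}\rangle \leq \frac{\gamma_t}{2(1-\gamma_t L)}\|\delta_t\|_*^2 + \frac{1-\gamma_t L}{2\gamma_t}\|x_t - x_{t-1}\|^2$. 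The condition $\gamma_t L < 1$ is precisely what makes $\frac{1-\gamma_t L}{\gamma_t}$ positive so this inequality is valid, and the Young weight is chosen so that its $\|x_t - x_{t-1}\|^2$ term cancels the negative quadratic term above exactly. Using $\|\delta_t\|_*^2 = Q^2 C_t$ then delivers the claimed bound.

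I expect the main obstacle to be bookkeeping rather than a conceptual hurdle. Because $B_t$ is defined through $x_{t-1} - x_*$ and not $x_t - x_*$, one is forced into the two-term split of the noise inner product, and one must pick exactly the right weight in Young's inequality so that no residual $\|x_t - x_{t-1}\|^2$ term survives. Tracking signs carefully, and checking that the strong-convexity and descent-lemma inequalities combine with the intended cancellation of $f(x_{t-1})$, is the part most prone to error.
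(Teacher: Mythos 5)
Your proof is correct and is essentially the paper's own argument: both rest on the same four ingredients, namely the optimality condition of Step~4 combined with the three-point identity, strong convexity of $f$ at $x_{t-1}$ versus $x_*$, the descent lemma for the increment $d_t = x_t - x_{t-1}$, and the split $\left<\delta_t, x_t - x_*\right> = \left<\delta_t, d_t\right> + \left<\delta_t, x_{t-1}-x_*\right>$ followed by Young's inequality with weight $\frac{1-\gamma_t L}{\gamma_t}$ to absorb the $\|d_t\|^2$ term. The only difference is one of ordering: the paper chains inequalities downward from the smoothness bound $f(x_t) \leq f(x_{t-1}) + \left<g(x_{t-1}), d_t\right> + \frac{L}{2}\|d_t\|^2$, whereas you start from the variational inequality of the proximal step and substitute the function-value bounds into it, which is the same derivation assembled in reverse.
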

\begin{proof} Let $d_t=x_t-x_{t-1}$.
\begin{align}
  f(x_t) &\leq f(x_{t-1}) + \left<g(x_{t-1}), d_t\right> + \frac{L}{2}
  \|d_t\|^2 \label{eq:5}\\
  & \leq f(x_*) + \left<g(x_{t-1}), x_{t} - x_*\right> -
  \frac{\mu}{2} \|x_{t-1}-x_*\|^2 + \frac{L}{2}\|d_t\|^2 \label{eq:6}\\
  & = f(x_*)+\left<\hat{g}_t, x_{t}-x_*\right> - \frac{\mu}{2}
  \|x_{t-1} - x_*\|^2 +\frac{L}{2} \| d_t\|^2 -\left<\delta_t,
    x_{t}-x_*\right> \nonumber\\
  &\leq f(x_*)+\frac{1-\gamma_t \mu}{2\gamma_t} \| x_{t-1}-x_*\|^2
  -\frac{1}{2\gamma_t}\|x_{t} - x_*\|^2 -\frac{1-\gamma_tL}{2\gamma_t}
  \| d_t\|^2 -\left<\delta_t,d_t\right>
  -\left<\delta_t, x_{t-1}-x_*\right> \label{eq:7}\\
  &\leq f(x_*)+\frac{1-\gamma_t\mu}{2\gamma_t} \| x_{t-1}-x_*\|^2
  -\frac{1}{2\gamma_t}\|x_{t} - x_*\|^2 +\frac{\gamma_t}{2(1-\gamma_t
    L)} \| \delta_t\|_*^2 -\left<\delta_t, x_{t-1}-x_*\right>.
\end{align}
Eq.~(\ref{eq:5}) is due to the Lipschitz continuity of $f$,
Eq.~(\ref{eq:6}) due to the strong convexity of $f$, Eq.~(\ref{eq:7})
due to the optimality of Step~4.
\end{proof}

\begin{proof}[Proof of Theorem~\ref{thm:1}] Because $\gamma_t
  L=\frac{2\kappa}{t+2\kappa} < 1$, it follows Lemma~\ref{lem:alg1} that
  \begin{align*}
    f(x_t)-f(x_*) & \leq \frac{1-\gamma_t\mu}{2\gamma_t} A_{t-1} -
    \frac{1}{2\gamma_t} A_{t} - Q B_t+ \frac{\gamma_t
      Q^2}{2(1-\gamma_t L)} \\
    &\leq (t+2\kappa-2) \frac{\mu A_{t-1}}{4}-(t+ 2\kappa) \frac{\mu
      A_{t}}{4} - Q B_t + \frac{Q^2}{\mu t}.
  \end{align*}
  As $f(x_t)-f(x_*) \geq \frac{\mu}{2}A_t$ it follows
  Lemma~\ref{lem:alg1} that $$A_t\leq d_t(a_t A_{t-1} + 2 b_t B_t + c_t
  C_t),$$ where $a_{t} =\frac{\mu(t+2\kappa -2)}{4}$, $b_{t} =
  -\frac{Q}{2}$, $c_t = \frac{Q^2}{\mu t}$ and
  $d_t=\frac{4}{\mu(t+2\kappa+2)}$.  Let $w_t=\alpha_t \prod_{\tau=t+1}^T
  (1-\alpha_\tau)= \frac{2t}{T(T+1)}$.  Assume that $\alpha_0=0$
  and $\gamma_0=1$.  Then
  \begin{align*}
    f(\bx_T) - f(x_*) &\leq \sum_{t=1}^T w_t(f(x_t) -
    f(x_*)) %
    \leq \sum_{t=1}^T w_t \left(
      \frac{1-\gamma_t\mu}{2\gamma_t} A_{t-1} - \frac{1}{2\gamma_t}
      A_{t} - Q B_t+ \frac{\gamma_t
        Q^2}{2(1-\gamma_t L)} \right)  \\
    & \leq \sum_{t=1}^T w_t \left( \frac{1-\gamma_t\mu}{2\gamma_t} -
      \frac{w_{t-1}}{2w_t \gamma_{t-1}} \right) A_{t-1} -
    \sum_{t=1}^T w_t Q B_t
    + \sum_{t=1}^T w_t \frac{\gamma_t Q^2}{2(1-\gamma_t L)} \\
    & \leq \sum_{t=1}^T w_t \left( \frac{L}{2t}
      A_{t-1} - Q B_t + \frac{Q^2}{\mu t}
    \right) 
    \leq \frac{L D^2}{T}+\sum_{t=1}^T w_t
    \left(- Q B_t + \frac{Q^2}{\mu t}
    \right). 
  \end{align*}
  Note that we use the factor $A_{t-1}\leq D^2$ for simplicity. %
  Let $\pa_t=0$, $\pb_t=b_t$, $\pc_t=c_t$, $X_{T+1}=\frac{L D^2}{T}$,
  and
  \begin{align*}
    \bP_t &= 0,\\
    \bR_t &=\frac{L D^2}{T}+\frac{2\kappa Q^2(T-t)}{L T^2}, \\
    \tP_t^2 &= \frac{4Q^2(T-t)(t+2\kappa+2)(t+2\kappa-1)}{T^2(T+1)^2},
    \\
    \tR_t^2 &= \frac{Q^4 \kappa^2}{L^2 T^2(T+1)^2} (8 (T-t)(T-t-1) +
    32\kappa T \vln(T,t)),
    \\
    \hR_t &=\frac{5 \kappa Q^2 (T-t)}{L T^2}.
  \end{align*}
  Given the facts that $\kappa \geq 1$, $(t+2\kappa-2)(t+2\kappa-1)
  \leq (t+2\kappa+1)(t+2\kappa-2)$, $(T-t+1)-(T-t) = 1$,
  $(T-t+1)^2-(T-t)^2 \geq 2 (T-t)$, $(T-t+1)^3-(T-t)^3 \geq 3
  (T-t)^2$, the proof of Eq.~(\ref{eq:8}) follows from
  Lemma~\ref{lem:probineq}, because for $t\geq 1$,
  \begin{align*}
    \lefteq a_t d_t \bP_t + w_t \pa_t =0=  \bP_{t-1}, \\
    \lefteq \bR_t +w_t c_t + c_t d_t \bP_t %
    \leq \bR_t+ \frac{2t}{T^2} \frac{Q^2}{\mu t} %
    \leq    \bR_{t-1},  \\
    \lefteq a_td_t \tP_t^2 + 4 (w_t\pb_t +b_t d_t\bP_t)^2\leq
    \frac{t+2\kappa-2}{t+2\kappa+2}\tP_t^2 + \frac{4t^2
      Q^2}{T^2(T+1)^2}
    \leq \frac{Q^2}{T^2(T+1)^2}(4(T-t)(t+2\kappa+1)(t+2\kappa-2) + 4t^2)\\
    & \leq \tP_{t-1}^2 -
    \frac{Q^2}{T^2(T+1)^2}(4(t+2\kappa+1)(t+2\kappa-2) - 4t^2)\\
    & = \tP_{t-1}^2 -
    \frac{Q^2}{T^2(T+1)^2}(4(2\kappa-1)t+16\kappa^2-8\kappa-8)
    \leq \tP_{t-1}^2 , \\
    \lefteq \tR_t^2 + c_t d_t \tP_t^2 \leq \tR_t^2 +
    \frac{16Q^4(T-t)(t+2\kappa+2)(t+2\kappa-1)}{\mu^2 T^2(T+1)^2
      t (t+2\kappa+2)} \\
    &\leq \frac{Q^4}{\mu^2 T^2(T+1)^2}
    (8(T-t)(T-t-1)+32\kappa T\vln(T,t) +16(T-t) +
    \frac{16(2\kappa-1)}{t}) \leq \tR_{t-1}^2,
  \end{align*}
  and 
  \begin{align*}
    \lefteq a_t d_t \tP_t^2 \hR_t +4b_t d_t (w_t\pb_t+b_t d_t\bP_t
    )\tP_t^2 \leq \frac{4Q^2(T-t)(t+2\kappa-1)(t+2\kappa-2)}{T^4}
    \hR_t +
    \frac{32Q^4t (T-t)(t+2\kappa-1)}{\mu T^6} \\
    &\leq \frac{Q^4}{\mu T^6}(20(T-t)^2(t+2\kappa-1)(t+2\kappa-2)+
    32 t (T-t)(t+2\kappa-1)) \\
    &\leq \tP_{t-1}^2 \hR_{t-1}- \frac{Q^4(T-t)}{\mu T^6}
    (2\times20(t+2\kappa+1)(t+2\kappa-2) - 32 t (t+2\kappa-1)) \\
    &= \tP_{t-1}^2 \hR_{t-1}- \frac{Q^4 (T-t)}{\mu T^6}
    (8t^2-8t +16\kappa(6t-5)+160\kappa^2-80) \leq \tP_{t-1}^2 \hR_{t-1}. \\
    \lefteq a_t d_t \tP_t^2 \hR_t^2 +4b_t d_t (w_t\pb_t+b_t
    d_t\bP_t )\tP_t^2 \hR_t+2 b_t^2 d_t^2 \tP_t^4\\
    &\leq \frac{4Q^2(T-t)(t+2\kappa+1)(t+2\kappa-2)}{T^4} \hR_t^2 +
    \frac{32Q^4t (T-t)(t+2\kappa-1)}{\mu T^6} \hR_t + \frac{128
      Q^6(T-t)^2(t+2\kappa-1)^2}{\mu^2 T^{8}} \\
    &\leq \tP_{t-1}^2 \hR_{t-1}^2- \frac{Q^6 (T-t)^2}{\mu^2T^8}(3
    \times 100 (t+2\kappa+1)(t+2\kappa-2)- 160 t
    (t+2\kappa-1)- 128 (t+2\kappa-1)^2) \\
    &= \tP_{t-1}^2 \hR_{t-1}^2- \frac{Q^6(T-t)^2}{\mu^2T^8}(
    12(t-1)^2+368(t-1)(\kappa-1)+688(\kappa-1)^2+508(t-1)+1656(\kappa-1)+368)
    \\
    &\leq \tP_{t-1}^2 \hR_{t-1}^2.
  \end{align*}
\end{proof}

\begin{proof}[Proof of Proposition~\ref{prop:original}]
  Because $\gamma_t L < 1$, it follows Lemma~\ref{lem:alg1} that
  \begin{align*}
    f(x_t)-f(x_*) & \leq \frac{1-\gamma_t\mu}{2\gamma_t} A_{t-1} -
    \frac{1}{2\gamma_t} A_{t} - Q B_t+ \frac{\gamma_t
      Q^2}{2(1-\gamma_t L)} \\
    &\leq (L+\mu (2t-1)) \frac{A_{t-1}}{2}-(L+ 2\mu t) \frac{A_{t}}{2}
    - Q B_t + \frac{Q^2}{4 \mu t}.
  \end{align*}
  As the strong convexity implies that $f(x_t)-f(x_*) \geq
  \frac{\mu}{2}A_t$, it follows Lemma~\ref{lem:alg1} that $$A_t\leq
  d_t(a_t A_{t-1} + 2 b_t B_t + c_t C_t),$$ where $a_{t}
  =\frac{\mu(t+\kappa-1)}{2}$, $b_{t} = -\frac{Q}{2}$, $c_{t} =
  \frac{Q^2}{2\mu t}$ and $d_t=\frac{2}{\mu(t+\kappa+1)}$.  Let
  $w_t=\alpha_t \prod_{\tau=t+1}^T (1-\alpha_\tau)= \frac{1}{T}$.  Assume
  that $\alpha_0=0$ and $\gamma_0=1$.  Then
  \begin{align*}
    f(\bx_T) - f(x_*) &\leq \sum_{t=1}^T w_t(f(x_t) - f(x_*)) %
    \leq \sum_{t=1}^T w_t \left( \frac{1-\gamma_t\mu}{2\gamma_t}
      A_{t-1} - \frac{1}{2\gamma_t} A_{t} - Q B_t+ \frac{\gamma_t
        Q^2}{2(1-\gamma_t L)} \right)  \\
    & \leq \sum_{t=1}^T w_t \left( \frac{1-\gamma_t\mu}{2\gamma_t} -
      \frac{w_{t-1}}{2w_t \gamma_{t-1}} \right) A_{t-1} - \sum_{t=1}^T
    w_t Q B_t
    + \sum_{t=1}^T w_t \frac{\gamma_t Q^2}{2(1-\gamma_t L)} \\
    & \leq \frac{L A_0}{2 T} +\sum_{t=1}^T w_t \left(- Q B_t +
      \frac{Q^2}{4\mu t}
    \right). 
  \end{align*}
  Let $\pa_t=0$, $\pb_t=b_t$, $\pc_t=c_t$, $X_{T+1}=\frac{L D^2}{2T}$,
  and
  \begin{align*}
    \bP_t &=0,\\
    \bR_t &= \frac{Q^2}{2 \mu T} \vln(T,t), \\
    \tP_t^2 &= \frac{Q^2 (t+\kappa+1)}{T^2}, \\
    \tR_t^2 &=\frac{Q^4}{\mu^2 T^2} \vln(T,t), \\
    \hR_t &=\frac{3Q^2}{\mu T}.
  \end{align*}
  The proof follows from Lemma~\ref{lem:probineq}, because for $k \geq 1$,
  \begin{align*}
    \lefteq \bP_t d_t a_t + p_t \pa_t =0 = \bP_{t-1}, \\
    \lefteq \bR_t +w_t c_t + \bP_t d_t c_t %
    \leq \frac{ Q^2}{2 \mu T} \ln\frac{T}{t} + \frac{1}{T}
    \frac{Q^2}{2\mu t} %
    \leq    \bR_{t-1},  \\
    \lefteq \tP_t^2 d_t a_t + 4 (w_t+\bP_t d_t)^2 b_t^2 \leq \frac{Q^2
      (\kappa+t+1)}{T^2} \frac{t+\kappa-1}{t+\kappa+1} +
    \frac{Q^2}{T^2} =\frac{Q^2 (t+\kappa)}{T^2(t+\kappa+1)}
    = \tP_{t-1}^2 , \\
    \lefteq \tR_t^2 + \tP_t^2 d_t c_t %
    \leq\frac{Q^4}{\mu^2 T^2} \ln\frac{T}{t} %
    + \frac{2Q^2}{\mu T^2} \frac{Q^2}{2\mu t} \leq \tR_{t-1}^2,
  \end{align*}
  and
  \begin{align*}
    \lefteq a_t d_t \tP_t^2 \hR_t +4b_t d_t (w_t\pb_t+b_t d_t\bP_t
    )\tP_t^2 \leq \frac{Q^2(t+\kappa-1)(t+\kappa+1)}{T^2(t+\kappa+1)}
    \hR_t + \frac{2Q^4(t+\kappa+1)}{\mu T^3 (t+\kappa+1)}
    \leq \frac{Q^2(t+\kappa)}{T^2} \hR_{t-1}. \\
    \lefteq a_t d_t \tP_t^2 \hR_t^2 +4b_t d_t (w_t\pb_t+b_t
    d_t\bP_t )\tP_t^2 \hR_t+2 b_t^2 d_t^2 \tP_t^4\\
    &\leq \frac{Q^2(t+\kappa-1)(t+\kappa+1)}{T^2(t+\kappa+1)} \hR_t^2
    + \frac{2Q^4(t+\kappa+1)}{\mu T^3 (t+\kappa+1)} \hR_t +
    \frac{2Q^6(t+\kappa+1)^2}{\mu^2 T^4 (t+\kappa+1)^2} \leq
    \frac{Q^2(t+\kappa)}{T^2} \hR_{t-1}^2.
  \end{align*}
\end{proof}

\begin{proof}[Proof of Proposition~\ref{prop:interior}] Because
  $\gamma_t L < 1$, it follows Lemma~\ref{lem:alg1} that
  \begin{align*}
    f(x_t)-f(x_*) & \leq \frac{1-\gamma_t\mu}{2\gamma_t} A_{t-1} -
    \frac{1}{2\gamma_t} A_{t} - Q B_t+ \frac{\gamma_t
      Q^2}{2(1-\gamma_t L)} \\
    &\leq (L+\mu (t-1)) \frac{A_{t-1}}{2}-(L+ \mu t) \frac{A_{t}}{2}
    - Q B_t + \frac{Q^2}{2 \mu t}.
  \end{align*}
  As the strong convexity implies that $f(x_t)-f(x_*) \geq
  \frac{\mu}{2}A_t$, it follows Lemma~\ref{lem:alg1} that $$A_t\leq
  d_t(a_t A_{t-1} + 2 b_t B_t + c_t C_t),$$ where $a_{t}
  =\frac{\mu(t+\kappa-1)}{2}$, $b_{t} = -\frac{Q}{2}$, $c_{t} =
  \frac{Q^2}{2\mu t}$ and $d_t=\frac{2}{\mu(t+\kappa+1)}$.  Because
  the solution is an interior point, we have
  \begin{align*}
    f(x_T)-f(x_*) & \leq \frac{L}{2} A_{T}.
  \end{align*}
  Let $w_t=0$, $X_{T+1}=\frac{L}{2} A_T$, and
  \begin{align*}
    \bP_t&= \frac{L(t+\kappa)(t+\kappa+1)}{2(T+\kappa)(T + \kappa+1)}, \\
    \bR_t&= \frac{\kappa^2 Q^2}{2L(T+\kappa)(T + \kappa+1)}
    (T-t+ \kappa\vln(T,t)), \\
    \tP_t^2&=\frac{Q^2\kappa^2
      (T-t)(t+\kappa)(t+\kappa+1)}{2(T+\kappa)^2(T+\kappa+1)^2} ,
    \\
    \tR_t^2&=\frac{\kappa^4Q^4}{4L^2(T+\kappa)^2(T+\kappa+1)^2}
    ((T-t)(T-t-1)+\kappa T\vln(T,t)),\\
    \hR_t &=\frac{2\kappa^2 Q^2(T-t)}{L(T+\kappa)(T+\kappa+1)}.
  \end{align*}
  The proof follows from Lemma~\ref{lem:probineq}, because
  \begin{align*}
    \lefteq \bP_t d_t a_t
    = \frac{L(t+\kappa)(t+\kappa-1)}{2(T+\kappa)(T + \kappa+1)} = \bP_{t-1}, \\
    \lefteq \bR_t + \bP_t d_t c_t %
    \leq \bR_t
    +\frac{L(t+\kappa)(t+\kappa+1)}{2(T+\kappa)(T+\kappa+1)}
    \frac{2}{\mu (t+\kappa+1)} \frac{Q^2}{2\mu t} \\
    &\leq \frac{\kappa^2 Q^2}{2 L (T+\kappa)(T+\kappa+1)}(T-t+\kappa
    \vln(T,t) + \frac{t + \kappa}{t})
    \leq    \bR_{t-1},  \\
    \lefteq \tP_t^2 d_t a_t + \bP_t^2 d_t^2 b_t^2 \leq
    \frac{t+\kappa-1}{t+\kappa+1} \tP_t^2+
    \frac{\kappa^2Q^2 (t+\kappa)^2}{4 (T+\kappa)^2(T+\kappa+1)^2}  \\
    &\leq \tP_{t-1}^2- \frac{\kappa^2Q^2}{(T+\kappa)^2(T+\kappa+1)^2}
    (\frac{1}{2}(t+\kappa-1)(t+\kappa)-\frac{1}{4}(t+\kappa)^2) \\
    &\leq \tP_{t-1}^2- \frac{\kappa^2Q^2}{(T+\kappa)^2(T+\kappa+1)^2}
    (\frac{1}{4}(t+\kappa)(t+\kappa-2))
    \leq \tP_{t-1}^2 , \qquad \text{[$t\geq 1$ and $\kappa\geq 1$]} \\
    \lefteq \tR_t^2 + \tP_t^2 d_t c_t \leq \tR_t^2+
    \frac{Q^4\kappa^2(T-t)(t+\kappa)}{2\mu^2(T+\kappa)^2(T+\kappa+1)^2
      t} \\
    & \leq \frac{Q^4\kappa^2}{4\mu^2(T+\kappa)^2(T+\kappa+1)^2}(
    (T-t)(T-t-1) + \kappa T\vln(T,t) +2(T-t)+\frac{(T-t)\kappa}{t})
    \leq \tR_{t-1}^2,
  \end{align*}
  and
  \begin{align*}
    \lefteq a_t d_t \tP_t^2 \hR_t +4b_t^2d_t^2\bP_t \tP_t^2 \\
    &\leq \frac{Q^2 \kappa^2 }{(T+\kappa)^2(T+\kappa+1)^2}
    \left(\frac{1}{2}(T-t)(t+\kappa)(t+\kappa-1)\hR_t +(T-t)(t+\kappa)
      \frac{L Q^2(t+\kappa)}{\mu^2(T+\kappa)(T+\kappa+1)} \right)
    \\
    &\leq \tP_{t-1}^2 \hR_{t-1}- \frac{ Q^4 \kappa^4
    }{L(T+\kappa)^3(T+\kappa+1)^3}
    \left(2 (T-t)(t+\kappa)(t+\kappa-1)-(T-t)(t+\kappa)^2\right) \\
    &\leq \tP_{t-1}^2 \hR_{t-1}- \frac{ Q^4 \kappa^4
    }{L(T+\kappa)^3(T+\kappa+1)^3 (T-t)} (t+\kappa)(t+\kappa-2)
    \leq \tP_{t-1}^2 \hR_{t-1}. \\
    \lefteq a_t d_t \tP_t^2 \hR_t^2 +4b_t^2d_t^2\bP_t \tP_t^2 \hR_t
    +2b_t^2d_t^2 \tP_t^4
    \\
    &\leq \frac{Q^2 \kappa^2 }{(T+\kappa)^2(T+\kappa+1)^2}
    (\frac{1}{2}(T-t)(t+\kappa)(t+\kappa-1)\hR_t^2 +(T-t)(t+\kappa)
    \frac{L Q^2(t+\kappa)}{\mu^2(T+\kappa)(T+\kappa+1)} \hR_t \\
    &\quad
    +\frac{Q^4\kappa^2(T-t)^2(t+\kappa)^2}{4\mu^2(T+\kappa)^2(T+\kappa+1)^2})
    \\
    &\leq \tP_{t-1}^2 \hR_{t-1}^2-\frac{Q^6 \kappa^6
    }{L^2(T+\kappa)^4(T+\kappa+1)^4}
    (6(T-t)^2(t+\kappa)(t+\kappa-1)-2(T-t)^2(t+\kappa)
    -\frac{1}{4}(T-t)^2(t+\kappa)^2)
    \\
    &\leq \tP_{t-1}^2 \hR_{t-1}^2-\frac{Q^6 \kappa^6 (T-t)^2(t+\kappa)
    }{L^2(T+\kappa)^4(T+\kappa+1)^4} (\frac{15}{4}(t+\kappa)-6)
    \leq \tP_{t-1}^2 \hR_{t-1}^2.
  \end{align*}
\end{proof}

Similar to Lemma 9 of \cite{lan08:_effic_method_stoch_compos_optim},
we have the following lemma for Algorithm~\ref{alg:2} with the
consideration of strongly convex cases.
\begin{lem}\label{lem:alg2} 
  Let $\delta_t=G(y_{t-1},\xi_t)-g(y_{t-1})$, $A_t=\|x_t-x_*\|^2$,
  $B_t=\left<\delta_t,x_{t-1}-x_*\right>/Q$,
  $C_t=\|\delta_t\|_*^2/Q^2$.  If $0<\alpha_t<1$, $\gamma_t>0$ and
  $\gamma_t(\alpha_t L+\mu)< 1$, it holds for Algorithm~\ref{alg:2}
  that
  \begin{equation*}
    f(\bx_{t})-f(x_*)
    \leq (1-\alpha_t)(f(\bx_{t-1})-f(x_*))
    +\frac{\alpha_t(1-\gamma_t\mu)}{2\gamma_t}A_{t-1} -
    \frac{\alpha_t}{2\gamma_t} A_t 
    -\alpha_tQB_t + \frac{\alpha_t\gamma_t}{2(1-\alpha_t\gamma_t
      L-\gamma_t\mu)}Q^2 C_t.
  \end{equation*}
\end{lem}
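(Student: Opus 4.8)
The plan is to follow the template of Lemma~\ref{lem:alg1}, but to arrange the three ingredients of the accelerated scheme so that the strong-convexity corrections land precisely on $A_{t-1}$ and on $\|x_t-x_{t-1}\|^2$. First I would record the affine identities forced by Steps~3 and~6 of Algorithm~\ref{alg:2}. From $\bx_t=\bx_{t-1}+\alpha_t(x_t-\bx_{t-1})$ and $y_{t-1}=\alpha_t x_{t-1}+(1-\alpha_t)\bx_{t-1}$ one gets $\bx_t-y_{t-1}=\alpha_t(x_t-x_{t-1})$, $y_{t-1}-x_{t-1}=(1-\alpha_t)(\bx_{t-1}-x_{t-1})$, and, most importantly, $(1-\alpha_t)\bx_{t-1}+\alpha_t x_*-y_{t-1}=\alpha_t(x_*-x_{t-1})$. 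The $L$-smoothness descent lemma applied at $y_{t-1}$ then gives
\[
 f(\bx_t)\leq f(y_{t-1})+\alpha_t\langle g(y_{t-1}),x_t-x_{t-1}\rangle+\frac{L\alpha_t^2}{2}\|x_t-x_{t-1}\|^2 .
\]

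Next I would bound $f(y_{t-1})$ from above by combining the $(1-\alpha_t)$-weighted convexity inequality at $\bx_{t-1}$ with the $\alpha_t$-weighted strong-convexity inequality at $x_*$, both anchored at $y_{t-1}$. Summing and using the third affine identity collapses the gradient contribution to $-\alpha_t\langle g(y_{t-1}),x_*-x_{t-1}\rangle$ and leaves a surplus $-\frac{\alpha_t\mu}{2}\|x_*-y_{t-1}\|^2$. Substituting into the descent inequality yields
\[
 f(\bx_t)-f(x_*)\leq(1-\alpha_t)(f(\bx_{t-1})-f(x_*))+\alpha_t\langle g(y_{t-1}),x_t-x_*\rangle-\frac{\alpha_t\mu}{2}\|x_*-y_{t-1}\|^2+\frac{L\alpha_t^2}{2}\|x_t-x_{t-1}\|^2 .
\]
The optimality of Step~\ref{step:6}, whose effective gradient is $\hat{g}_t-\mu(y_{t-1}-x_{t-1})$, combined with the three-point identity for $\langle x_t-x_{t-1},x_*-x_t\rangle$, gives $\langle \hat{g}_t-\mu(y_{t-1}-x_{t-1}),x_t-x_*\rangle\leq\frac{1}{2\gamma_t}(A_{t-1}-A_t-\|x_t-x_{t-1}\|^2)$. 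Writing $\hat{g}_t=g(y_{t-1})+\delta_t$ then converts $\alpha_t\langle g(y_{t-1}),x_t-x_*\rangle$ into $\frac{\alpha_t}{2\gamma_t}(A_{t-1}-A_t-\|x_t-x_{t-1}\|^2)-\alpha_t\langle\delta_t,x_t-x_*\rangle+\alpha_t\mu\langle y_{t-1}-x_{t-1},x_t-x_*\rangle$.

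The crux, and the step I expect to be the main obstacle, is the bookkeeping of the strong-convexity pieces $\alpha_t\mu\langle y_{t-1}-x_{t-1},x_t-x_*\rangle-\frac{\alpha_t\mu}{2}\|x_*-y_{t-1}\|^2$. The key is the elementary identity
\[
 \langle y_{t-1}-x_{t-1},x_t-x_*\rangle-\frac{1}{2}\|x_*-y_{t-1}\|^2=-\frac{1}{2}\|x_{t-1}-x_*\|^2+\frac{1}{2}\|x_t-x_{t-1}\|^2-\frac{1}{2}\|x_t-y_{t-1}\|^2 ,
\]
verified by expanding both sides. Dropping the nonpositive last term, these pieces contribute $-\frac{\alpha_t\mu}{2}A_{t-1}+\frac{\alpha_t\mu}{2}\|x_t-x_{t-1}\|^2$: the first summand turns the $A_{t-1}$ coefficient $\frac{\alpha_t}{2\gamma_t}$ into $\frac{\alpha_t(1-\gamma_t\mu)}{2\gamma_t}$, while the second raises the coefficient of $\|x_t-x_{t-1}\|^2$ to exactly $-\frac{\alpha_t}{2\gamma_t}(1-\alpha_t\gamma_t L-\gamma_t\mu)$, whose negativity is precisely where the hypothesis $\gamma_t(\alpha_t L+\mu)<1$ enters.

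Finally I would split the noise as $-\alpha_t\langle\delta_t,x_t-x_*\rangle=-\alpha_t Q B_t-\alpha_t\langle\delta_t,x_t-x_{t-1}\rangle$ and apply the Fenchel--Young inequality in the dual-norm pair to the last term against the $\|x_t-x_{t-1}\|^2$ budget computed above. With the matching choice of parameter the quadratic term is absorbed and the conjugate term is exactly $\frac{\alpha_t\gamma_t}{2(1-\alpha_t\gamma_t L-\gamma_t\mu)}Q^2 C_t$, which produces the claimed inequality. The only genuinely new element relative to Lemma~\ref{lem:alg1} is the identity above: it shows how the $-\mu(y_{t-1}-x_{t-1})$ correction inside the prox step simultaneously manufactures the $(1-\gamma_t\mu)$ gain on $A_{t-1}$ and the extra $\gamma_t\mu$ in the variance denominator.
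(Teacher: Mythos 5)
Your proof is correct and follows essentially the same route as the paper's: the descent lemma at $y_{t-1}$, the $(1-\alpha_t)/\alpha_t$ convexity/strong-convexity split, the prox optimality of Step~\ref{step:6} via the three-point identity, and Young's inequality to absorb $-\alpha_t\left<\delta_t, x_t-x_{t-1}\right>$ into the leftover $\|x_t-x_{t-1}\|^2$ budget. The only cosmetic difference is that the paper bundles the $\mu(y_{t-1}-x_{t-1})$ correction into a shifted prox center $v_t=x_{t-1}+\gamma_t\mu(y_{t-1}-x_{t-1})$ and expands everything in one identity, whereas you keep that term explicit and handle it with your separate elementary identity --- the same algebra, organized differently.
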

\begin{proof}
  Let $d_t=x_{t}-x_{t-1}$ and $v_t=x_{t-1}+\gamma_t\mu (y_{t-1}-x_{t-1})$.
  Note that $ \bx_t-y_{t-1}=\alpha_t d_t.$
  \begin{align}
    f(\bx_{t}) &\leq
    f(y_{t-1})+\left<g(y_{t-1}),\bx_t-y_{t-1}\right>+\frac{L}{2}\|\bx_t-y_{t-1}\|^2
    \label{eq:1} \\
    &=
    (1-\alpha_t)[f(y_{t-1})+\left<g(y_{t-1}),\bx_{t-1}-y_{t-1}\right>]
    +\alpha_t[f(y_{t-1})+\left<g(y_{t-1}),x_{t}-y_{t-1}\right>]
    +\frac{\alpha_t^2L}{2}\|d_t\|^2
    \nonumber \\
    &\leq (1-\alpha_t)f(\bx_{t-1}) + \alpha_tf(x_*)
    +\alpha_t\left<g(y_{t-1}),x_{t}-x_*\right>
    -\frac{\alpha_t\mu}{2} \|y_{t-1} - x_*\|^2
    +\frac{\alpha_t^2L}{2}\|d_t\|^2 
    \label{eq:2} \\
    &= (1-\alpha_t)f(\bx_{t-1}) + \alpha_tf(x_*)
    +\alpha_t\left<\hat{g}_t,x_{t}-x_*\right>
    -\frac{\alpha_t\mu}{2} \|y_{t-1} - x_*\|^2
    +\frac{\alpha_t^2L}{2}\|d_t\|^2 
    -\alpha_t\left<\delta_t,x_{t}-x_*\right>
    \nonumber \\
    &\leq (1-\alpha_t)f(\bx_{t-1})
    +\alpha_t f(x_*)
    +\frac{\alpha_t}{\gamma_t} \left<x_t-v_t, x_*-x_t\right>
    -\frac{\alpha_t\mu}{2} \|y_{t-1} - x_*\|^2 
    +\frac{\alpha_t^2L}{2}\|d_t\|^2 
    -\alpha_t\left<\delta_t,x_{t}-x_*\right>
    \label{eq:3} \\
    &= (1-\alpha_t)f(\bx_{t-1})
    +\alpha_t f(x_*)
    +\frac{\alpha_t(1-\gamma_t\mu)}{2\gamma_t}\|x_{t-1} -x_*\|^2
    -\frac{\alpha_t}{2\gamma_t}\|x_t-x_* \|^2 
    -\frac{\alpha_t\mu}{2}\|y_{t-1}-x_t\|^2
    \nonumber \\
    &\quad
    -\frac{\alpha_t(1- \gamma_t\mu-\alpha_t \gamma_t L)}{2\gamma_t}\|d_t\|^2
    -\alpha_t\left<\delta_t,d_t\right> 
    -\alpha_t\left<\delta_t,x_{t-1}-x_*\right> \nonumber \\
    &\leq (1-\alpha_t)f(\bx_{t-1})
    +\alpha_t f(x_*)
    +\frac{\alpha_t(1-\gamma_t\mu)}{2\gamma_t}\|x_{t-1} -x_*\|^2
    -\frac{\alpha_t}{2\gamma_t}\|x_t-x_* \|^2 
    \nonumber   \\
    &\quad
    +\frac{\alpha_t\gamma_t}{2(1-\gamma_t\mu - \alpha_t\gamma_t L )}\|\delta_t\|_*^2
    -\alpha_t\left<\delta_t,x_{t-1}-x_*\right>.
  \end{align}
  Eq.~(\ref{eq:1}) is due to the Lipschitz continuity of $f$,
  Eq.~(\ref{eq:2}) due to the strong convexity of $f$,
  Eq.~(\ref{eq:3}) due to the optimality of Step~\ref{step:6}.
\end{proof}

\begin{proof}[Proof of Theorem~\ref{thm:2}]
  Let $\lambda_t = \prod_{\tau=t+1}^T
  (1-\alpha_t)=\frac{t(t+1)}{T(T+1)}$. We have and
  \begin{align*}
    \frac{\lambda_{t} \alpha_t(1-\gamma_t\mu)}{\gamma_t}
    -\frac{\lambda_{t-1}\alpha_{t-1}}{\gamma_{t-1}}
    &=\frac{2t}{T(T+1)}(\frac{2L}{t}+\frac{\mu(t+1)}{2}-\mu)
    -\frac{2(t-1)}{T(T+1)}(\frac{2L}{t-1}+\frac{\mu t}{2}) =0, \quad
    \forall t>1.
  \end{align*}
  Let $a_t=\frac{\mu (4\kappa+ t (t-1))}{2t}$, $b_t= -\frac{Q}{2}$,
  $c_t=\frac{Q^2}{\mu t}$, and $d_t= \frac{2t}{\mu (4\kappa+ t
    (t+1))}$.  Summing up the inequality in Lemma~\ref{lem:alg2} weighted
  by $\lambda_t$, we have
  \begin{equation}
    \label{eq:4}
  \begin{aligned} 
    f(\bx_t)-f(x_*) &\leq \frac{\lambda_{1}
      \alpha_1(1-\gamma_1\mu)}{2\gamma_1}A_0 - \frac{\lambda_t
      \alpha_t}{2\gamma_t}A_t + \sum_{\tau=1}^t \lambda_\tau
    \alpha_\tau(-Q B_t+ \frac{\gamma_\tau}{2 (1- \alpha_\tau
      \gamma_\tau L-\gamma_\tau \mu)} Q^2 C_\tau)
    \\
    &\leq \frac{2 L}{T(T+1)} A_0 - \frac{2t}{T(T+1)}\frac{A_t}{d_t}
    +\sum_{\tau=1}^t \frac{2 \tau}{T(T+1)}\left( 2 b_\tau B_\tau+
      c_\tau C_\tau\right).
  \end{aligned}
  \end{equation}
  Let $\tA_t := \frac{d_t}{t}\left\{ L A_0 +
    \sum_{\tau=1}^t \left( 2 \tau b_\tau B_t+ \tau c_t C_t\right)\right\}$.
  Because $f(\bx_t)-f(x_*) \geq 0$, we have
  \begin{align*}
    \frac{t}{d_t} A_t \leq \frac{t}{d_t} \tA_t = \frac{t-1}{d_{t-1}}
    \tA_{t-1} + 2 t b_t B_t + t c_tC_t = t a_t \tA_{t-1} + 2 t b_t B_t
    + t c_tC_t
  \end{align*}
  Then 
  \begin{equation}
    \label{eq:8}
    A_t \leq \tA_t = d_t(a_t\tA_{t-1} + 2b_t B_t + c_t C_t).
  \end{equation}
  Given Eq.~(\ref{eq:4}) and Eq.~(\ref{eq:8}),
  letting $w_t=\frac{2t}{T(T+1)}$, $\pa_t=0$, $\pb_t=b_t$, $\pc_t=c_t$,
  $X_{T+1}=\frac{2L D^2}{T(T+1)}$, and
  \begin{align*}
    \bP_t &= 0,\\
    \bR_t &=\frac{2L D^2}{T^2}+\frac{2\kappa Q^2(T-t)}{L T^2}, \\
    \tP_t^2 &= \frac{5Q^2 (T-t) (t(t+1)+4\kappa)}{T^4}, \\
    \tR_t^2 &= \frac{5\kappa^2 Q^4(T-t)(T-t-1)}{2L^2  T^4 },\\
    \hR_t &=\frac{4 \kappa Q^2 (T-t)}{LT^2},
  \end{align*}
  the proof follows from Lemma~\ref{lem:probineq}, because
  \begin{align*}
    \lefteq a_t d_t \bP_t + w_t \pa_t =0=  \bP_{t-1}, \\
    \lefteq \bR_t +w_t \pc_t + c_t d_t \bP_t %
    \leq \bR_t+ \frac{2t}{T^2} \frac{Q^2}{\mu t} %
    \leq    \bR_{t-1},  \\
    \lefteq a_td_t \tP_t^2 + 4 (w_t\pb_t +b_t d_t\bP_t)^2\leq
    \frac{t(t-1)+4\kappa}{t(t+1)+4\kappa}\tP_t^2 + \frac{4t^2
      Q^2}{T^4} \\
    & \leq \frac{Q^2}{T^4}(6(t(t-1)+4\kappa)(T-t)+4t^2)
    \leq\tP_{t-1}^2 - \frac{Q^2}{T^4}(5 (t(t-1)+4\kappa)-4t^2) \\
    & \leq\tP_{t-1}^2 - \frac{Q^2}{T^4}(t^2 -5t+20\kappa)
    \leq\tP_{t-1}^2 - \frac{Q^2}{T^4}(3 t) \leq \tP_{t-1}^2 , \\
    \lefteq \tR_t^2 + c_t d_t \tP_t^2 \leq \tR_t^2 +
    \frac{5Q^4(T-t)}{\mu^2 T^4 } \leq \tR_{t-1}^2,
  \end{align*}
  and 
  \begin{align*}
    \lefteq a_t d_t \tP_t^2 \hR_t +4b_t d_t (w_t \pb_t + b_t d_t
    \bP_t)\tP_t^2 \leq \frac{Q^2 (t(t-1)+4\kappa)(T-t)}{T^4}
    \hR_t + \frac{4 t^2 Q^4 (T-t)}{\mu T^6} \\
    & \leq
    \frac{Q^4 }{\mu T^6}(4 (t(t-1)+4\kappa)(T-t)^2+ 4 t^2 (T-t)) \\
    & \leq \tP_{t-1}^2 \hR_{t-1}-
    \frac{Q^4 (T-t)}{\mu T^6}(2 \times 4 (t(t-1)+4\kappa)- 4 t^2 ) \\
    & = \tP_{t-1}^2 \hR_{t-1}- \frac{Q^4 (T-t)}{\mu T^6}(4 t^2 -8 t +
    32 \kappa) \leq \tP_{t-1}^2 \hR_{t-1}- \frac{Q^4 (T-t)}{\mu
      T^6}(14 t)
    \leq \tP_{t-1}^2 \hR_{t-1} \\
    \lefteq a_t d_t \tP_t^2 \hR_t^2 +4b_t d_t (w_t \pb_t + b_t d_t
    \bP_t)\tP_t^2 \hR_t+2 b_t^2 d_t^2 \tP_t^4\\
    &\leq \frac{5 Q^2 (t(t-1)+4\kappa)(T-t)}{T^4} \hR_t^2 + \frac{20
      t^2 Q^4 (T-t)}{\mu T^6} \hR_t
    + \frac{100 t^2 Q^6 (T-t)^2}{\mu^2 T^8}  \\
    &\leq \frac{Q^6}{\mu^2T^8} (80(t(t-1)+4\kappa)(T-t)^3 +
    80 t^2(T-t)^2+100 t^2(T-t)^2) \\
    &\leq \tP_{t-1}^2 \hR_{t-1}^2 -\frac{Q^6 (T-t)^2}{\mu^2T^8}
    (3 \times 80 (t(t-1)+4\kappa) - 80 t^2-100 t^2) \\
    &= \tP_{t-1}^2 \hR_{t-1}^2 -\frac{Q^6(T-t)^2}{\mu^2T^8} (60t^2-240
    t +960\kappa) \leq \tP_{t-1}^2 \hR_{t-1}^2
    -\frac{(T-t)^2Q^6}{\mu^2T^8} (240 t) \leq \tP_{t-1}^2 \hR_{t-1}^2.
  \end{align*}
\end{proof}
\appendix
\section*{Supporting lemma}
We use part of the proof of Lemma 8 in \cite{birg98:_minim}.
\begin{lem}\label{lem:0}
  Let $B>0$ and $\sigma>0$.  If the log-moment generating function
  satisfies
  \begin{align*}
    \log \expect \exp\{u Z\} &\leq \frac{\sigma^2 u^2}{2 (1-uB)}
    \quad \text{for all $0 \leq u < 1/B$}, 
  \end{align*}
  then
  \begin{equation}
    \label{eq:1}
    \Pr\{ Z \geq \epsilon\} \leq
    \exp\{-\frac{\epsilon^2}{2 \sigma^2 + 2 \epsilon B}\}
    \quad\text{for all $\epsilon \geq 0$},
  \end{equation}
  and
  \begin{equation}
    \label{eq:2}
    \Pr\{ Z \geq \sqrt{2\theta \sigma^2} + \theta B\} \leq
    \exp\{-\theta\}
    \quad \text{for all $\theta \geq 0$}.
  \end{equation}
\end{lem}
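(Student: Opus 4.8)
The plan is to obtain both displayed tail bounds from a single Chernoff estimate, specializing the free exponent $u$ differently in each case. Applying Markov's inequality to $\exp(uZ)$ and then the hypothesis on the log-moment generating function gives, for every $u\in[0,1/B)$,
\[
  \Pr\{Z\ge\epsilon\}\le \exp(-u\epsilon)\,\expect\exp\{uZ\}
  \le \exp\Bigl(-u\epsilon+\frac{\sigma^2u^2}{2(1-uB)}\Bigr).
\]
Everything then reduces to choosing an admissible $u$ that makes the exponent small enough; in each case I would verify $0\le u<1/B$ before substituting.

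For the first bound (\ref{eq:1}) I would take $u=\epsilon/(\sigma^2+\epsilon B)$, which lies in $[0,1/B)$ since $\sigma^2>0$. Then $1-uB=\sigma^2/(\sigma^2+\epsilon B)$, so the quadratic term equals $u^2(\sigma^2+\epsilon B)/2=\epsilon^2/(2(\sigma^2+\epsilon B))$ and the linear term equals $-\epsilon^2/(\sigma^2+\epsilon B)$; adding them gives exactly the exponent $-\epsilon^2/(2(\sigma^2+\epsilon B))$ claimed in (\ref{eq:1}).

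The second bound (\ref{eq:2}) requires more care, and here is the one genuine subtlety: it does \emph{not} follow by inverting (\ref{eq:1}). Solving $\epsilon^2/(2(\sigma^2+\epsilon B))=\theta$ yields $\epsilon=\theta B+\sqrt{\theta^2B^2+2\theta\sigma^2}$, and the cleanest bound on the square root, $\sqrt{\theta^2B^2+2\theta\sigma^2}\le\theta B+\sqrt{2\theta\sigma^2}$, only produces the threshold $2\theta B+\sqrt{2\theta\sigma^2}$, with a factor $2$ on the linear term rather than the $1$ in the statement. Thus (\ref{eq:2}) is strictly sharper than what (\ref{eq:1}) yields and must be proved by re-optimizing $u$ directly against the master inequality.

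To that end I would set $\epsilon=\sqrt{2\theta\sigma^2}+\theta B$ and choose $u=\sqrt{2\theta}/(\sigma+B\sqrt{2\theta})\in[0,1/B)$. Writing $a=\sqrt{2\theta}/\sigma$ so that $u=a/(1+Ba)$ and $1-uB=1/(1+Ba)$, the quadratic term becomes $\sigma^2a^2/(2(1+Ba))=\theta/(1+Ba)$, while from $\epsilon=\sigma^2a(1+Ba/2)$ the linear term is $-u\epsilon=-\theta(2+Ba)/(1+Ba)$; the two collapse to exactly $-\theta$, giving (\ref{eq:2}). I expect the whole argument to be only a few lines of algebra once this minimizing $u$ is written down, so the main obstacle is conceptual rather than computational: recognizing that the sharp linear constant forces a fresh optimization of the MGF bound instead of a reduction to (\ref{eq:1}).
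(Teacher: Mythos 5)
Your proof is correct, and it follows the same Chernoff route as the paper---bound $\Pr\{Z\ge\epsilon\}$ by $\exp(-u\epsilon)\expect\exp\{uZ\}$ and invoke the log-MGF hypothesis---but the execution differs in a way worth noting. The paper computes the Legendre transform $h(\epsilon)=\sup_u\{u\epsilon-\sigma^2u^2/(2(1-uB))\}$ exactly, identifying the optimizing $u=B^{-1}[1-\sigma(2\epsilon B+\sigma^2)^{-1/2}]$; it then obtains the first bound by lower-bounding the closed form of $h(\epsilon)$, and the second via the parametric identity that when $\theta=\sigma^2u^2/(2(1-uB)^2)=h(\epsilon)$ one has $\epsilon=\sqrt{2\theta\sigma^2}+\theta B$ exactly. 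You never compute the supremum: for the first bound you substitute the explicit, deliberately suboptimal multiplier $u=\epsilon/(\sigma^2+\epsilon B)$, whose exponent collapses exactly to $-\epsilon^2/(2\sigma^2+2\epsilon B)$, while for the second bound your choice $u=\sqrt{2\theta}/(\sigma+B\sqrt{2\theta})$ is precisely the paper's optimizer re-expressed as a function of $\theta$ (solve $\theta=\sigma^2u^2/(2(1-uB)^2)$ for $u$), so that half of your argument coincides with the paper's up to notation. What your route buys is economy and robustness: any feasible $u$ yields a valid bound, so you only need to verify two algebraic identities rather than locate and manipulate the exact maximizer. What the paper's route buys is the closed form of $h(\epsilon)$ itself, which exhibits both tail bounds as weakenings of a single tight quantity. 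Your side remark that the second bound cannot be recovered by inverting the first without degrading $\theta B$ to $2\theta B$ is also correct, and is consistent with the paper's structure, which likewise derives the second bound from the exact transform rather than from the first bound.
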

\begin{proof} It follows Markov's inequality that
  \begin{align*}
    \Pr\{ Z \geq \epsilon\} & \leq \inf_u \expect \exp\{ -u \epsilon +
    u Z\} = \exp\{-h(\epsilon)\},
  \end{align*}
  where $h(\epsilon):=\sup_{u} u\epsilon - \frac{\sigma^2 u^2}{2
    (1-u B)}$.  Also, the supremum is achieved for
  \begin{equation*}
    \epsilon= \frac{\sigma^2u}{1-u B}+
    \frac{\sigma^2u^2 B}{2(1-u B)^2}=\frac{\sigma^2u}{2(1-uB)}+
    \frac{\sigma^2u}{2(1-u B)^2},
  \end{equation*}
  i.e. $u=B^{-1}[1-\sigma(2\epsilon B + \sigma^2)^{-1/2}] <
  1/B$. Then we prove Eq.~(\ref{eq:1}), as
  \begin{equation*}
    h(\epsilon)=\frac{\epsilon^2}{\epsilon B
      +\sigma^2+\sigma^2(1+2\epsilon
      B/\sigma^2)^{1/2}} \geq \frac{\epsilon^2}{2 \epsilon B + 2\sigma^2}.
  \end{equation*}
  Let
  \begin{equation*}
    \theta:=\frac{\sigma^2u^2}{2(1-u B)^2}
    =h(\epsilon).
  \end{equation*}
   Then we prove Eq.~(\ref{eq:2}), as
   \begin{align*}
    \sqrt{2\theta \sigma^2}+\theta B = \frac{\sigma^2u}{(1-u B)}+
    \frac{\sigma^2u^2 B}{2(1-u B)^2} = \epsilon.
  \end{align*}
\end{proof}

\bibliographystyle{mlapa}
\bibliography{zsh}

\end{document}